\documentclass{article}

\usepackage{iclr2026_conference,times}
\usepackage{latexsym}
\usepackage[T1]{fontenc}
\usepackage[utf8]{inputenc}
\usepackage{microtype}
\usepackage{hyperref}
\usepackage{url}
\hypersetup{
  colorlinks=true,
  linkcolor=black,
  citecolor=blue,
  urlcolor=blue
}
\usepackage{graphicx}
\usepackage{dcolumn}
\usepackage{booktabs}
\usepackage{amsfonts}
\usepackage{nicefrac}
\usepackage{enumitem}

\newcommand{\TabNumDelta}[1]{\rlap{\kern0.15em{\scriptsize$_{#1}$}}}
\newcommand{\VTCn}[1]{\multicolumn{1}{D{.}{.}{2.2}}{#1}}
\newcommand{\VTCDelta}[1]{\rlap{\kern0.12em{\scriptsize$_{#1}$}}}
\usepackage{amsmath}
\usepackage{bm}
\usepackage{multirow}
\usepackage{algorithm}
\usepackage{algpseudocode}
\usepackage{placeins}
\usepackage{float}
\usepackage{caption}
\usepackage{amsthm}
\theoremstyle{definition}
\newtheorem{definition}{Definition}
\theoremstyle{plain}

\newtheorem{proposition}{Proposition}

\usepackage{tcolorbox}
\usepackage{amssymb}
\usepackage{mathrsfs}

\newcommand{\logotitle}{\textbf{LedgerMind}}
\newcommand{\logofull}{Provenance-Constrained Multimodal Agentic Reasoning with a Structured Evidence Ledger}
\definecolor{baselineBlue}{RGB}{150,190,220}
\definecolor{ledgerMindPink}{RGB}{255,150,145}

\title{LedgerMind: A Structured Evidence Runtime for Auditable Multimodal Agent Trajectories}

\author{
Enjun Du$^{1,2,*}$ \quad
Hange Zhou$^{1,*}$ \quad
Chenxu Du$^{1}$ \quad
Siyi Liu$^{1}$ \\
\bfseries
Zirong Chen$^{1,3}$ \quad
Ziyu Zheng$^{4}$ \quad
Yongqi Zhang$^{1,\dagger}$ \\[3pt]
\normalfont
$^{1}$The Hong Kong University of Science and Technology (Guangzhou) \\
$^{2}$The University of Hong Kong \quad
$^{3}$Tsinghua University \quad
$^{4}$University of Sussex \\
$^{*}$Equal contribution. \quad
$^{\dagger}$Corresponding author. \\
\texttt{enjundu.cs@gmail.com}
}

\iclrfinalcopy
\begin{document}

\maketitle
\fancyhead{}
\renewcommand{\headrulewidth}{0pt}

\begin{abstract}

Multimodal agents for visual question answering increasingly
operate as multi-step trajectories that interleave perception,
retrieval, and reasoning, yet evaluation still largely reduces
to final-answer accuracy. This aggregate signal cannot tell
whether a correct answer was reached through grounded evidence,
language priors, or accidental error cancellation. We propose
to treat a multimodal agent trajectory as a
provenance-constrained state machine: tool outputs are
normalized into a Structured Evidence Ledger that serves as
the trajectory state, downstream
reasoning and decision claims may cite only active ledger
entries, grounding is checked at the entity and numeric level,
and repair is realized as typed state transitions that cannot
introduce content without tool-produced provenance. We
instantiate this design as \logotitle{} (\logofull{}), augmented by a Three-Layer Grounding
Protocol, an Adaptive Dual-Path Dispatcher that matches
reasoning depth to question complexity, and an Event-Triggered
Verification-and-Repair engine with a formal provenance
non-amplification guarantee. We use \logotitle{} to target four
recurring failure patterns that final-answer accuracy tends to
obscure: unsupported intermediate reasoning, citation-backed
entity hallucination (Phantom Grounding), over-reasoning on
simple queries, and repair-time amplification. Experiments across multiple
multimodal reasoning benchmarks and backbone MLLMs show that
\logotitle{} improves both answer accuracy and
trajectory-level faithfulness.
\end{abstract}

\section{Introduction}
\label{sec:intro}

Recent multimodal large language models (MLLMs) have enabled
visual question answering systems that interleave perception,
retrieval, and language-based
reasoning~\cite{openai2023gpt4,gemini2024,liu2023llava,bai2023qwenvl,
chen2024internvl}. As these systems become
agentic~\cite{yao2023react,schick2023toolformer,lu2024chameleon},
their outputs are no longer single predictions but multi-step
trajectories containing observations, intermediate claims,
tool calls, and final decisions. Yet final-answer accuracy
remains the primary aggregate signal for such systems, and
it cannot tell whether a correct answer was obtained through
grounded evidence, language priors, or accidental error
cancellation.

\paragraph{From accuracy to trajectory faithfulness.}
Multimodal agentic reasoning requires trajectory-level
faithfulness: intermediate reasoning claims should be
auditable against the evidence that supports them. This is
difficult because most agent frameworks store the trajectory
as a free-form text
buffer~\cite{yao2023react,shinn2023reflexion} in which tool
outputs, model paraphrases, inferred facts, and repaired
claims are mixed together, so a claim can appear plausible,
or even cite an evidence identifier, while introducing
entities or numerical values absent from the cited evidence.
Natural-language rationales frequently fail to reflect the
process that actually produced the
answer~\cite{turpin2023unfaithful,lanham2023measuring,lyu2023faithful,paul2024makingreasoningmatter},
and analogous citation-content mismatches are documented in
retrieval-augmented
generation~\cite{liu2023verifiability,gao2023alce,muller2024correctness}.

Existing remedies only partially address this issue.
Requiring models to cite evidence does not suffice, because
citation structure can mask conclusion-level hallucination;
we call this failure Phantom Grounding, a form of spurious
grounding in which a claim cites valid evidence IDs yet
introduces entities or numerical values absent from the
cited source. Free-form
self-refinement~\cite{madaan2023selfrefine,shinn2023reflexion,wang2023selfconsistency,dhuliawala2024cove,yin2024woodpecker,huang2024opera}
is also insufficient, since LLMs largely cannot self-correct
without external
feedback~\cite{huang2024selfcorrect,stechly2024selfverification,xu2024pride}
and repair itself can introduce new unsupported claims while
fixing old ones (repair-time amplification). Finally, deeper
reasoning is not uniformly
beneficial~\cite{sui2025overthinking,chen2024overthinking,hassid2025dontoverthink,han2025empiricalreasoninglength}:
unnecessary multi-step pipelines can overwrite simple visual
or factual answers with noisy intermediate inferences.

We propose \logotitle{} (\logofull{}), a
provenance-constrained framework for faithful multimodal
agentic reasoning. Rather than storing the trajectory as an
unstructured text buffer, \logotitle{} normalizes each tool
output into an entry of a Structured Evidence Ledger
carrying source, type, confidence, lifecycle status, and
dependencies; downstream reasoning and decision claims may
cite only active ledger entries, making provenance a
structural constraint rather than a prompting preference. A
Three-Layer Grounding Protocol then verifies, beyond
structural coverage, whether cited evidence actually
contains the entities and numerical values used in the
claim. An Adaptive Dual-Path Dispatcher avoids unnecessary
deep reasoning on simple or knowledge-oriented queries, and
an Event-Triggered Verification-and-Repair engine modifies
the ledger or actions only through predefined typed
operators, yielding a provenance-level guarantee that repair
cannot introduce ledger entries without tool-produced
provenance.

\begin{figure}[h]
    \centering
    \includegraphics[width=\textwidth]{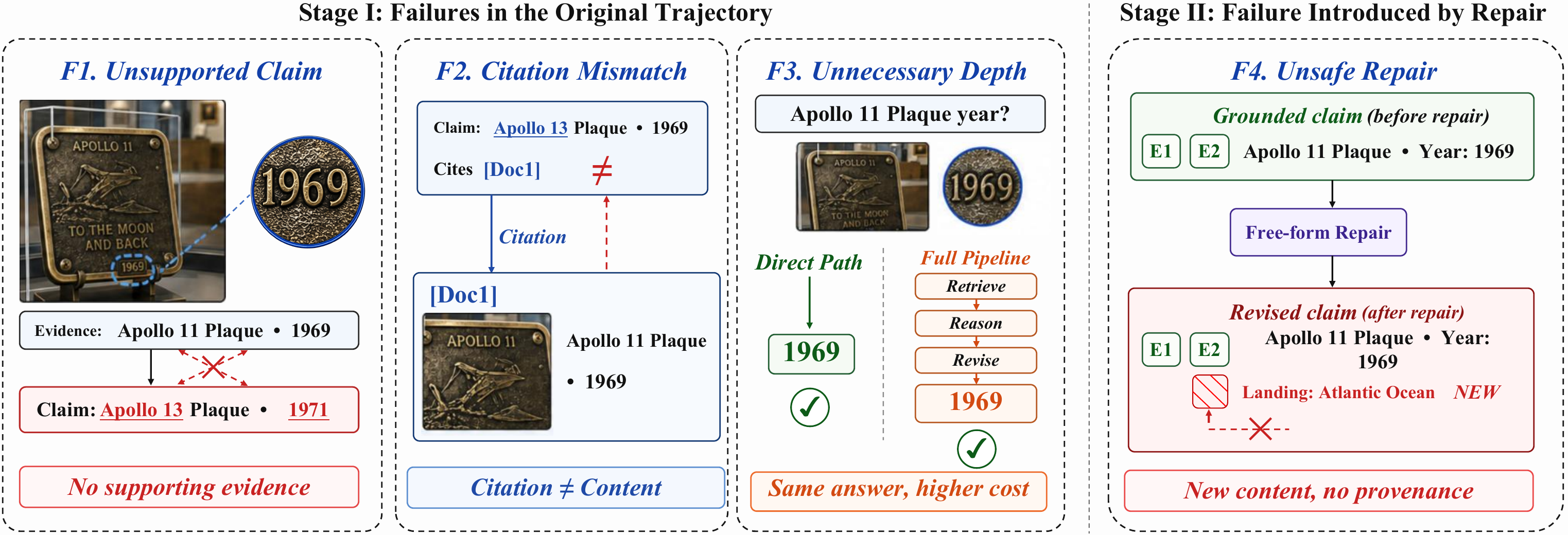}
    \caption{Representative failure patterns in multimodal
    agentic reasoning that final-answer accuracy tends to
    obscure.}
    \label{fig:intro_case}
\end{figure}

This design targets four recurring failure patterns that
aggregate accuracy tends to obscure
(Figure~\ref{fig:intro_case}): (F1) the Trajectory
Faithfulness Gap, where correct answers coexist with
unsupported intermediate claims; (F2) Phantom Grounding,
where citation structure passes but conclusion-level
entities are fabricated; (F3) the Over-Reasoning Paradox,
where additional reasoning steps overwrite an initially
correct answer on knowledge-intensive queries; and (F4)
Repair-Time Amplification, where free-form reflection
injects new unsupported claims while fixing old ones. We
treat F1--F4 as recurring and actionable patterns rather
than an exhaustive taxonomy; they serve as a diagnostic
lens in our trajectory-level evaluation.

Our contributions are as follows:
\begin{enumerate}[leftmargin=*]
    \item We introduce the Structured Evidence Ledger as
    the central state abstraction for multimodal agentic
    reasoning. Instead of treating trajectories as
    free-form text histories, the ledger stores
    tool-produced evidence with source, type, confidence,
    lifecycle status, and dependencies, so that downstream
    claims can be checked against active provenance rather
    than against prompt-level citation instructions alone.

    \item We formulate trajectory-level faithfulness as a
    provenance-constrained reasoning problem and identify
    four recurring failure patterns that final-answer
    accuracy obscures, including Phantom Grounding, which
    we formalize at the level of multimodal agent
    trajectories.

    \item We instantiate these ideas in \logotitle{}, a
    ledger-centered framework that binds claims to
    tool-produced evidence, verifies conclusion-level
    entity and numeric consistency, adapts reasoning depth
    to question complexity, and performs event-triggered
    typed repair with a provenance non-amplification
    guarantee (Proposition~\ref{prop:nonamp}).

    \item We evaluate \logotitle{} with trajectory-level
    faithfulness diagnostics on five answer-level multimodal
    benchmarks, one chain-level multimodal search benchmark,
    and Hard-200, a stress-test set of 200 complex
    image-grounded queries
    (Appendix~\ref{app:hard200}), showing consistent gains
    in both answer accuracy and grounded reasoning behavior
    across multiple backbone MLLMs.
\end{enumerate}

\section{Related Works}

\vspace{2mm}

\subsection{Multimodal Agentic Reasoning and Adaptive Inference}
\label{sec:related_agent}

\paragraph{Multimodal agents and tool-augmented reasoning.}
Recent MLLMs~\cite{openai2023gpt4,gemini2024,bai2023qwenvl,liu2023llava,chen2024internvl}
have extended text-only LLMs to jointly reason over images
and language. To tackle tasks beyond single-pass perception,
a line of work casts MLLMs as agents that interleave
reasoning with external tools:
ReAct~\cite{yao2023react} alternates thought and action
traces; Toolformer~\cite{schick2023toolformer} teaches
models to invoke APIs in a self-supervised manner;
ViperGPT~\cite{suris2023vipergpt} and
Chameleon~\cite{lu2024chameleon} compose vision-and-language
modules via program synthesis; and retrieval
augmentation~\cite{lewis2020rag} further grounds generation
in external knowledge. Recent systems broaden this design
space through history-aware routing over large tool
ecosystems (\mbox{ToolACE-MCP})~\cite{yao2025toolacemcp},
capability-level multimodal orchestration
(\mbox{Octopus})~\cite{guo2025octopusagenticmultimodalreasoning},
and specialized multi-agent synthesis
(\mbox{GraphMaster})~\cite{du2025graphmaster}. Long-horizon
agents further require persistent state across evolving
interactions, as studied by the RealMem
benchmark~\cite{bian2025realmem}. These frameworks commonly
treat the reasoning trace as an unstructured text buffer,
concatenating intermediate claims, tool outputs, and inferred
facts without explicit provenance, confidence, or lifecycle
metadata. Our framework departs from this practice by turning
the trajectory into a provenance-constrained ledger.

Structured execution records and provenance tracking have a
long systems tradition. Database and workflow provenance
capture the origin and causal history of derived
data~\cite{buneman2001whywhere,moreau2011opm}, while
distributed tracing records cross-component execution for
diagnosis~\cite{sigelman2010dapper}. We therefore do not
claim that structured tracing or dependency metadata is new
in isolation; our focus is their operational use in a shared
evidence state for multimodal agent grounding, adaptive
execution, and constrained repair.

Chain-of-thought (CoT) prompting~\cite{wei2022cot} and
extensions such as Self-Consistency~\cite{wang2023selfconsistency}
and Tree-of-Thoughts~\cite{yao2023tot} are standard
techniques for eliciting multi-step reasoning. Yet deeper
reasoning is not always better: recent analyses reveal an
overthinking phenomenon where unnecessarily long chains
waste computation and may even degrade
accuracy~\cite{sui2025overthinking,chen2024overthinking,hassid2025dontoverthink,han2025empiricalreasoninglength}.
Most adaptive inference methods focus on compressing or
truncating reasoning length within a single mode and do
not distinguish question types benefiting from deep
multi-step reasoning from those better served by shallow
retrieval-augmented answering, especially in multimodal
settings. This motivates a dispatcher-based control policy
instead of a uniform pipeline.

\subsection{Hallucination Mitigation and Trajectory-Level Faithfulness}
\label{sec:related_hallu}

\paragraph{Self-correction and verification.}
Prior work improves reasoning reliability through post-hoc
verification or repair:
Self-Refine~\cite{madaan2023selfrefine} iteratively critiques
and rewrites outputs, Reflexion~\cite{shinn2023reflexion}
learns from verbal feedback across trials, and
Chain-of-Verification~\cite{dhuliawala2024cove} plans and
answers fact-check questions. SE-Agent instead optimizes
multi-step agent trajectories through revision,
recombination, and
refinement~\cite{lin2025seagentselfevolutiontrajectoryoptimization}.
However, Huang~et~al.~\cite{huang2024selfcorrect} and
subsequent
analyses~\cite{stechly2024selfverification,xu2024pride} show
that LLMs cannot reliably self-correct without external
feedback: free-form reflection is unconstrained and may
itself introduce new unsupported claims, so repair based
purely on natural-language self-reflection carries an
inherent risk of error amplification. Our framework avoids
this by restricting repair to a finite set of typed
state-transition operators.

\paragraph{Hallucination mitigation and trajectory-level
evaluation.}
Hallucinations are a pervasive failure mode of
MLLMs~\cite{rawte2023hallucination,bai2024hallucination},
addressed by training-time methods that alter data or
objectives and by inference-time methods that correct
errors without retraining:
Woodpecker~\cite{yin2024woodpecker} extracts concepts and
verifies them with external detectors, and
OPERA~\cite{huang2024opera} introduces an over-trust
penalty during decoding. Most target caption-level object
hallucination on static MLLM outputs and are evaluated by
final-answer accuracy alone. A related line on
retrieval-augmented generation distinguishes citation
correctness from citation
faithfulness~\cite{liu2023verifiability,gao2023alce,muller2024correctness};
our Phantom Grounding notion instantiates this distinction
at the level of multimodal agent trajectories. Recent
benchmarks for multimodal chain-of-thought
quality~\cite{wang2025mmecot} begin to assess reasoning
robustness beyond accuracy, yet the faithfulness of
intermediate reasoning steps in tool-using agentic
pipelines remains comparatively underexplored. Claim-level
decomposition has also been used as a training signal: CuRe
constructs category-aware atomic claims for dense
video-caption
rewards~\cite{gao2026claimlevelrubricrewardsvideo}. Adjacent
video settings combine explicit reasoning with
segmentation~\cite{xu2025videosegr1reasoningvideoobjectsegmentation},
while continual segmentation separates class recognition
from mask generation to retain earlier
knowledge~\cite{guo2025decouplingcontinualsemanticsegmentation}.

\vspace{-2mm}
\begin{figure*}[htp]
    \centering
    \includegraphics[width=0.99\textwidth]{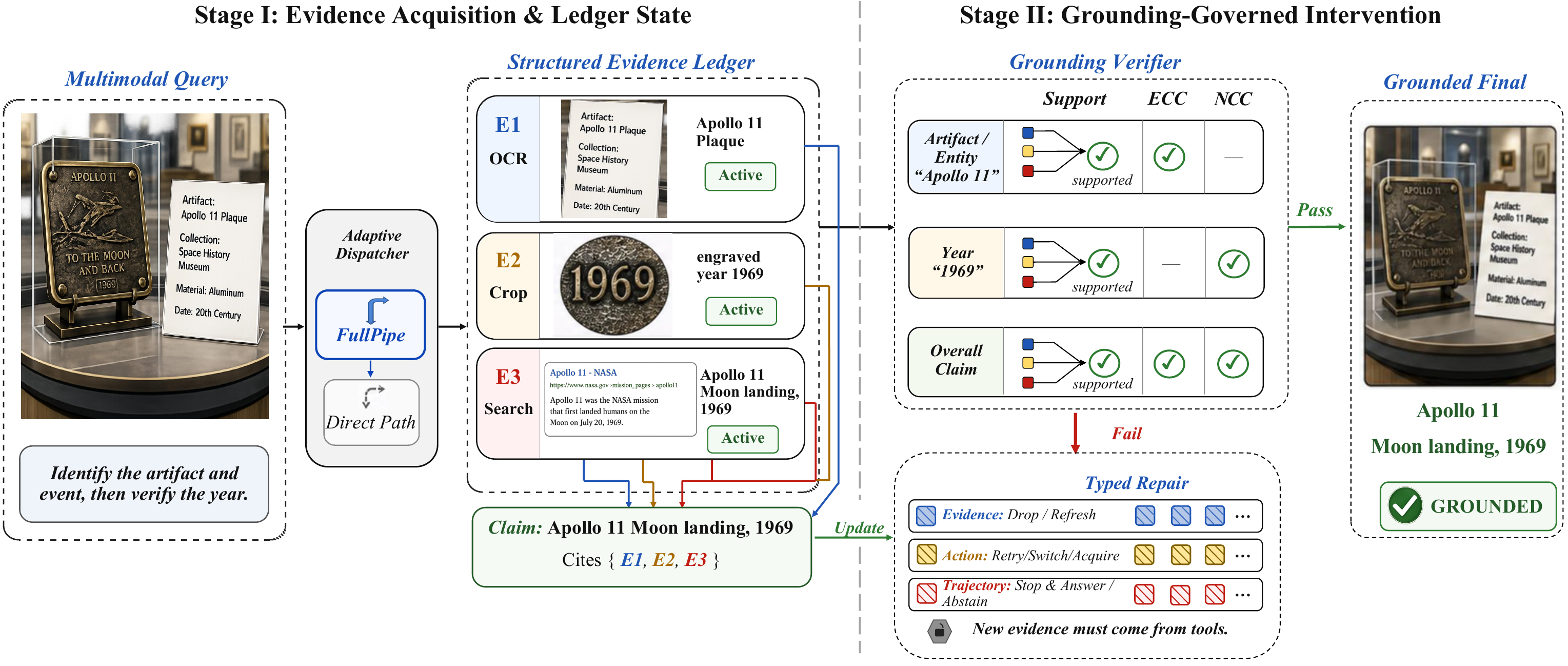}
    \vskip -4pt
    \caption{The \logotitle{} framework. The Structured
    Evidence Ledger is the central trajectory state: tool
    outputs are normalized into ledger entries, reasoning and
    decision claims cite active entries, grounding checks
    claim--evidence consistency, the dispatcher controls
    evidence-acquisition depth, and typed repair modifies only
    ledger state or tool actions.}
    \vskip -8pt
    \label{fig:framework}
\end{figure*}
\vspace{-2mm}

\section{Method}
\label{sec:method}

Figure~\ref{fig:framework} summarizes the two-stage
\logotitle{} runtime. Stage I routes the multimodal query
through the adaptive dispatcher and records tool-produced
perception and retrieval evidence as active ledger entries.
Stage II checks the resulting claim through support coverage,
ECC, and NCC. A passing claim yields the grounded final
answer; a failing claim can invoke only typed evidence-,
action-, or trajectory-level repair before the ledger is
updated.

\logotitle{} treats multimodal agentic reasoning as a
provenance-constrained state machine. Its central state is a
Structured Evidence Ledger: every tool return is normalized
into ledger evidence, every downstream reasoning or decision
claim is checked against active ledger entries, and every
repair is a typed transition on the same state. Thus the
grounding protocol, the dispatcher, and the repair engine are
not separate patches for separate failures; they are
ledger-facing operations that enforce the same provenance
contract at different points of the trajectory. Complete
schemas, trigger rules, classifier details, threshold choices,
prompts, and full pseudocode are given in
Appendices~\ref{app:algorithm}--\ref{app:prompts}.

\subsection{Structured Evidence Ledger as Trajectory State}
\label{sec:prelim}
\label{sec:ledger}

Given a question $q$ and multimodal context
$\mathcal{I}=\{I_1,\dots,I_K\}$, an agent produces a
trajectory $\tau=(s_1,\dots,s_T)$. Each step
$s_t=(a_t,o_t,\mathcal{C}_t)$ contains an action $a_t$, a tool
observation $o_t$, and a set of claims $\mathcal{C}_t$.
Claims are typed as Observation Claims (OC), which directly
record tool outputs; State Claims (SC), which aggregate or
infer over evidence; and Decision Claims (DC), which determine
the final answer $\hat{y}$. We write
$\mathcal{C}_\tau=\bigcup_t\mathcal{C}_t$ with subsets
$\mathcal{C}_\tau^{\textsc{OC}}$,
$\mathcal{C}_\tau^{\textsc{SC}}$, and
$\mathcal{C}_\tau^{\textsc{DC}}$.

Prior agent frameworks~\citep{yao2023react,shinn2023reflexion}
typically store trajectories as free-form transcripts, where
tool observations, model paraphrases, inferred facts, and
later revisions are concatenated in one text buffer. The
ledger $\mathcal{L}_t$ replaces this transcript with the
canonical runtime state. Each entry $e\in\mathcal{L}_t$
minimally stores its source tool, epistemic type
$\kappa(e)\in\{\textsc{Perception},\textsc{Retrieval},
\textsc{Derivation}\}$, rule-normalized fact $f_e$, confidence
$\sigma_e\in[0,1]$, lifecycle status
$\omega_e\in\{\textsc{Active},\textsc{Stale},\textsc{Conflicted},
\textsc{Dropped}\}$, and dependency links to the claims that
cite it. The fact $f_e$ is obtained by a deterministic
tool-specific mapping $\mathcal{M}:\mathcal{O}\rightarrow
\Sigma^*$, such as OCR text, a search snippet, or crop
metadata, rather than by LLM paraphrasing. The full 11-field
schema, time-to-live policy, lifecycle operations, and
dependency graph are specified in Appendix~\ref{app:ledger_full}.

The ledger enforces four invariants. (I1) \emph{Evidence
origination}: every evidence entry is either a direct tool
return or the image of a tool return under $\mathcal{M}$.
(I2) \emph{Citation validity}: every SC or DC must cite active
ledger entries before it can influence later reasoning. (I3)
\emph{Claim--evidence containment}: citation is necessary but
not sufficient; conclusion-level entities and numerical values
must also be licensed by the cited evidence pool. (I4)
\emph{Repair locality}: repair may change entry status or
invoke tools, but cannot append unsupported free-form content
to the ledger.

Let
$\mathcal{L}^{\textsc{Active}}_t=\{e\in\mathcal{L}_t:\omega_e=\textsc{Active}\}$.
For a claim $c$, its support set is resolved as
\begin{equation}
    S(c)=\{e\in\mathcal{L}^{\textsc{Active}}_t: e\text{ is cited by }c\}.
    \label{eq:support_set}
\end{equation}

A non-judgment claim with $S(c)=\emptyset$ is dropped, while a
judgment claim with $S(c)=\emptyset$ is demoted below the
verification threshold before it can affect the decision. In
this sense, provenance is a structural precondition for
reasoning rather than a prompt-level preference.

These invariants instantiate the runtime objective
\begin{equation}
    \max_{\pi}\;
    \mathbb{E}_{\tau\sim p_\pi}\!\left[
    S(\tau)-\sum_{r\in\mathcal{R}_\tau}
    \lambda_{\mathrm{type}(r)}\,\mathrm{cost}(r)\right]
    \quad\mathrm{s.t.}\quad
    \mathrm{UCR}_{\mathrm{reason}}(\tau)\le\epsilon .
    \label{eq:obj}
\end{equation}

Here, $S(\tau)\in\{0,1\}$ is final-answer success,
$\mathcal{R}_\tau$ is the set of repair operations, and
$\lambda_k\ge0$ is the cost weight for repair type
$k\in\{\textsc{Evidence},\textsc{Action},\textsc{Trajectory}\}$.
Eq.~\ref{eq:obj} formalizes the runtime trade-off controlled
by evidence acquisition and repair; \logotitle{} is otherwise
training-free and uses frozen backbone MLLMs.

\subsection{Ledger-Governed Reasoning and Control}
\label{sec:ledger_ops}

At runtime, \logotitle{} executes a ledger-governed control
loop: acquire evidence, validate claims against the active
ledger, select the necessary depth of evidence acquisition,
and repair only when a ledger-level violation is detected. We
describe these operations at the interface level here and
defer implementation rules to the appendices.

\paragraph{Grounding as claim--evidence containment.}
\label{sec:grounding}
For containment checks, DERIVATION entries are transparent rather than
self-supporting: when a claim cites a DERIVATION entry, we recursively resolve
that entry to its PERCEPTION and RETRIEVAL leaf dependencies. The cited pool is
therefore built only from leaf evidence,
\[
\mathcal{P}_c=\{f_e:e\in S_{\mathrm{leaf}}(c),\;
\kappa(e)\in\{\textsc{Perception},\textsc{Retrieval}\}\}.
\] First, Support Coverage
requires lexical overlap with active evidence,
\begin{equation}
    \rho(c)=
    \frac{|\mathrm{Tok}(c)\cap\bigcup_{e\in S(c)}\mathrm{Tok}(f_e)|}
         {|\mathrm{Tok}(c)|},
    \label{eq:rho}
\end{equation}
where $\mathrm{Tok}(\cdot)$ removes stopwords and extracts
content tokens. Second, Entity Consistency Check (ECC)
requires all conclusion-level non-numeric entities to appear
in the cited pool up to aliases:
\begin{equation}
    \mathrm{ECC}(c)=\mathbf{1}\!
    \left[\mathrm{Ent}(c)\subseteq
    \mathrm{Ent}(\mathcal{P}_c)\cup\mathrm{Alias}(\mathcal{P}_c)\right].
    \label{eq:ecc}
\end{equation}

Third, Numeric Coherence Check (NCC) requires each numerical
value in $c$ to match a cited value under a type-aware
tolerance $\Delta_u$:
\begin{equation}
    \mathrm{NCC}(c)=\mathbf{1}\!\left[
    \forall v_c\in\mathrm{Num}(c),\;
    \exists v_e\in\mathrm{Num}(\mathcal{P}_c):\
    |v_c-v_e|\le\Delta_{u(v_c)}(v_e)\right].
    \label{eq:ncc}
\end{equation}

For years, counts, dates, option labels, and identifiers,
$\Delta_u$ is exact; for continuous visual readings it allows
a small relative tolerance. Failure of ECC or NCC marks a
conclusion--evidence mismatch and lowers claim confidence below
the verification threshold, triggering hypothesis verification
when budget permits. Extraction rules, alias handling,
confidence demotion values, and threshold sensitivity are given
in Appendices~\ref{app:thresholds} and~\ref{app:algorithm}.

\paragraph{Adaptive evidence-depth control.}
\label{sec:dispatcher}
A ledger constraint does not imply that every query should use
the deepest pipeline. Unnecessary tool calls may add irrelevant
evidence and give the model more opportunities to overwrite a
simple answer. \logotitle{} therefore chooses between two
ledger-compatible paths before expensive evidence acquisition:
\begin{equation}
    \hat{y}=
    \begin{cases}
        \textsc{Direct}(q,\mathcal{I}) & \phi(q)=\texttt{simple},\\[2pt]
        \textsc{FullPipe}(q,\mathcal{I}) & \phi(q)=\texttt{complex},
    \end{cases}
    \label{eq:dispatch}
\end{equation}
where $\phi(q)$ is a deterministic complexity classifier. The
\textsc{Direct} path uses a small evidence budget for
single-step visual or shallow factual queries; the
\textsc{FullPipe} path performs task planning, OC extraction,
optional crop-zoom and Dual-Read Verification, retrieval,
grounded
$[\mathrm{E}]\!\rightarrow\![\mathrm{I}]\!\rightarrow\![\mathrm{J}]$
reasoning, and final decision checks. Both paths write through
the same ledger interface, so their outputs remain comparable
under the same audit. The classifier rules, call budgets, and
prompts are given in Appendix~\ref{app:dispatcher_pipeline}.

\paragraph{Event-triggered typed repair.}
\label{sec:verify_repair}
Verification is implemented as an event handler over ledger
state rather than as always-on free-form self-critique. The
verifier fires on high-risk events such as tool anomaly, stale
reference, evidence conflict, confidence drop, unsupported
decision, or conclusion--evidence mismatch. Once triggered,
repair is restricted to seven typed operators in three layers:
evidence (\textsc{Drop}, \textsc{Refresh}), action
(\textsc{Retry}, \textsc{Switch}, \textsc{Acquire}), and
trajectory (\textsc{StopAndAnswer}, \textsc{Abstain}). The
policy follows locality-first escalation with a fixed
per-trigger budget, so repair modifies the ledger state or
invokes tools instead of rewriting the trajectory in
unconstrained natural language. Formal trigger conditions,
operator semantics, and cost bounds are given in
Appendix~\ref{app:triggers} and Appendix~\ref{app:repair}.

\subsection{Provenance Non-Amplification}
\label{sec:nonamp}

The typed repair interface yields a provenance-level guarantee.

\begin{proposition}[Provenance Non-Amplification]
\label{prop:nonamp}
For any repair operator $r\in\mathcal{R}$ and any ledger
$\mathcal{L}$, let $\mathcal{L}'=r(\mathcal{L})$. Every new
entry in $\mathcal{L}'\setminus\mathcal{L}$ has tool-produced
provenance; that is, it is either a direct tool output or the
image of a tool output under the deterministic template mapping
$\mathcal{M}$.
\end{proposition}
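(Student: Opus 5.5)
The plan is a case analysis over the finite operator set $\mathcal{R}=\{\textsc{Drop},\textsc{Refresh},\textsc{Retry},\textsc{Switch},\textsc{Acquire},\textsc{StopAndAnswer},\textsc{Abstain}\}$ from Section~\ref{sec:verify_repair}, with each operator treated as a typed state transition on the ledger. For every operator I will show that $\mathcal{L}'\setminus\mathcal{L}$ is either empty, or consists only of entries created through the single ledger write path. That write path is \texttt{append}$(\mathcal{M}(o))$ for a tool observation $o$, possibly together with the raw $o$ itself. Invariant (I1), evidence origination, then gives the conclusion directly.

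First I fix what counts as ``new''. Entries are identified by their identifier together with their content field $f_e$. Metadata updates do not create new entries: changes to $\omega_e$, $\sigma_e$, time-to-live, or dependency links are in-place updates. This matters because \textsc{Drop} and the status changes inside \textsc{Refresh} only move entries between \textsc{Active}, \textsc{Stale}, \textsc{Conflicted}, and \textsc{Dropped}. Under this reading they contribute nothing to $\mathcal{L}'\setminus\mathcal{L}$.

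Next I go through the operators in groups.
\begin{itemize}
\item \textsc{Drop} changes status only, so $\mathcal{L}'\setminus\mathcal{L}=\emptyset$.
\item \textsc{Refresh} re-invokes the originating tool on a stale entry. It marks the old entry \textsc{Stale} or \textsc{Dropped} and appends $\mathcal{M}(o')$ for the fresh observation $o'$.
\item \textsc{Retry}, \textsc{Switch}, and \textsc{Acquire} each issue a tool call, with the same tool, an alternative tool, or a new query respectively. The only ledger effect is appending the normalized return $\mathcal{M}(o)$, or the raw return.
\item \textsc{StopAndAnswer} and \textsc{Abstain} terminate the trajectory. They emit a decision, or an abstention, as a DC, and claims are not ledger entries. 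So $\mathcal{L}'=\mathcal{L}$ up to metadata.
\end{itemize}
In every case each new entry has the form $o$ or $\mathcal{M}(o)$ with $o$ tool-produced, which is exactly the statement.

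The main obstacle is ruling out a hidden channel through which LLM-generated text enters the ledger, especially \textsc{Derivation} entries. These carry $\kappa(e)=\textsc{Derivation}$, and a repair step might try to create one. My first move is to invoke invariant (I4), repair locality: no repair operator has a free-form append in its signature. Derivation entries are produced only by the reasoning stage, not by repair, so they fall outside the scope of the proposition. I would also note that the containment checks already resolve derivations to their \textsc{Perception}/\textsc{Retrieval} leaves, so the provenance-relevant content of any entry is tool-originated. A second subtlety is that the proposition depends on $\mathcal{M}$ being deterministic and tool-specific rather than an LLM paraphrase. I will state this explicitly as the assumption the guarantee rests on, citing the definition of $\mathcal{M}$ in Section~\ref{sec:ledger}. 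If needed, I would close the argument with an induction over the repair sequence: composing operators preserves the property, because each single step does.
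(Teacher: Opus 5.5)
Your proposal is correct and follows the paper's proof, which is the same operator-by-operator check: \textsc{Drop}, \textsc{Switch}, \textsc{StopAndAnswer}, and \textsc{Abstain} add no entry, while \textsc{Refresh}, \textsc{Retry}, and \textsc{Acquire} add entries only as $\mathcal{M}$-images of fresh tool observations (the paper does not fold the ensuing tool call into \textsc{Switch}, but the conclusion is the same). One aside is slightly off but harmless: \textsc{Derivation} entries can arise through repair, because VQA outputs are typed \textsc{Derivation} and \textsc{Retry} or \textsc{Switch} can invoke VQA; such entries are still $\mathcal{M}$-images of tool outputs, so the claim holds, and it is your case analysis, not the appeal to (I1) or (I4) (which essentially restate the claim), that carries the proof.
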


\begin{proof}
\textsc{Drop} creates no new entry. \textsc{Refresh},
\textsc{Retry}, and \textsc{Acquire} add entries only by
invoking a tool and applying $\mathcal{M}$ to the resulting
observation. \textsc{Switch} changes the action choice but
adds no ledger entry by itself. \textsc{StopAndAnswer} and
\textsc{Abstain} terminate without changing the ledger. Hence
any new entry must have tool-produced provenance.
\end{proof}

Proposition~\ref{prop:nonamp} does not assert that tool
outputs are always factually correct. Rather, it guarantees
that repair cannot fabricate new provenance-less content; any
remaining error is tied to an explicit source and therefore
remains auditable.

\subsection{Trajectory Faithfulness Metrics}
\label{sec:metrics}

Final-answer accuracy measures task success but not whether
the trajectory was supported by evidence. We therefore audit
trajectories with four metrics. A claim is grounded only if it
cites active ledger entries and passes the claim--evidence
containment checks of \S\ref{sec:grounding}. To prevent
perception-heavy pipelines from diluting the denominator with
tool-faithful OCs, the main unsupported-claim rate is computed
over SC and DC only. Let
$\mathcal{C}_\tau^{\mathrm{R}}=
\mathcal{C}_\tau^{\textsc{SC}}\cup\mathcal{C}_\tau^{\textsc{DC}}$,
and let $g(c)$ indicate that claim $c$ is grounded:
\vspace{-2mm}
\begin{align}
    \mathrm{UCR}_{\mathrm{reason}}(\tau)&=
    \frac{|\{c\in\mathcal{C}_\tau^{\mathrm{R}}:g(c)=0\}|}
    {|\mathcal{C}_\tau^{\mathrm{R}}|}
    \quad(\downarrow),
    \label{eq:metrics}\\[-1pt]
    \mathrm{GDR}(\tau)&=
    \frac{|\{c\in\mathcal{C}_\tau^{\textsc{DC}}:g(c)=1\}|}
    {|\mathcal{C}_\tau^{\textsc{DC}}|}
    \;(\uparrow).
\end{align}
For corpus-level diagnosis, with decision-grounding threshold $\gamma$,
\begin{align}
    \mathrm{R4R}&=
    \frac{|\{\tau:S(\tau)=1\wedge\mathrm{GDR}(\tau)>\gamma\}|}
    {|\{\tau:S(\tau)=1\}|}
    \;(\uparrow),
    \label{eq:r4r}\\[-1pt]
    \mathrm{WDG}&=
    \frac{|\{\tau:S(\tau)=0\wedge\mathrm{GDR}(\tau)>\gamma\}|}
    {|\{\tau:S(\tau)=0\}|}
    \;(\downarrow).
    \label{eq:wdg}
\end{align}
$\mathrm{UCR}_{\mathrm{reason}}$ and GDR measure unsupported
reasoning and grounded decisions. R4R separates
evidence-backed success from lucky correct answers, while WDG
captures wrong answers that nevertheless look grounded at the
decision level. We report OC error rate separately to isolate
perception noise, together with EUR, RR, RC, and SE in
Appendix~\ref{app:metrics_full}.

\vspace{-2mm}
\section{Experiments}
\vspace{-2mm}
\subsection{Experimental Setup}
\label{sec:setup}

We evaluate \logotitle{} on six public benchmarks---
VTC-Bench~\citep{zhu2026vtcbench},
MMStar~\citep{chen2024mmstar},
MMMU~\citep{yue2024mmmu},
MMMU-Pro~\citep{yue2025mmmupro},
EMMA~\citep{hao2025emma}, and
MC-Search~\citep{ning2026mcsearch}---plus the in-house Hard-200
stress set (Appendix~\ref{app:hard200}); trajectory-level
faithfulness is audited on V*Bench and EMMA-160. \logotitle{} is
instantiated on six frontier MLLMs from four vendors
(GPT-4o/GPT-5.5, Gemini-3-Flash/3.1-Pro,
Claude-Sonnet-4.6/Opus-4.7, Kimi-K2.6) and compared against each
backbone's native CoT or thinking-mode output under an identical
tool budget, so any gap reflects framework design rather than
extra compute. Besides task accuracy, we report the
trajectory-level metrics of \S\ref{sec:metrics}
($\mathrm{UCR}_{\text{reason}}$, GDR, R4R, WDG) and, for
MC-Search, the chain-alignment metrics HPS and RD; claim-level
auditing uses a fixed external judge (Gemini-3.1-Pro) under an
identical protocol on both sides.

\vspace{-2mm}
\subsection{Main Results}
\vspace{-2mm}
\label{sec:main_results}

We report absolute scores together with the gap to each
native backbone; per-category and per-backbone tables are
deferred to Appendix~\ref{fULL_RESULTS}.

\paragraph{VTC-Bench and general VLM benchmarks.}
On VTC-Bench (Figure~\ref{fig:vtcbench_combined}(a)),
\logotitle{} with Gemini-3-Flash reaches $58.9\%$, a new
state of the art over every proprietary tool-use and
general-purpose baseline; the same framework lifts
Gemini-3.1-Pro by $+11.8$, Gemini-3-Flash by $+12.4$, and
GPT-4o by $+23.3$ points, with the largest gain on the
weakest backbone, indicating backbone-agnostic improvement.
\logotitle{} also ranks first on MMStar, MMMU, and MMMU-Pro
against eleven baselines (Figure~\ref{fig:vtcbench_combined}(b));
the MMMU-Pro gain is particularly telling because multi-step
agents tend to
overthink~\citep{sui2025overthinking,chen2024overthinking}
on this setting, yet the Adaptive Dual-Path Dispatcher
(\S\ref{sec:dispatcher}) routes knowledge-heavy queries to
the direct path, addressing F3 by control policy rather than
longer chains of thought.

\begin{figure*}[t]
    \centering

    \includegraphics[width=0.98\textwidth]{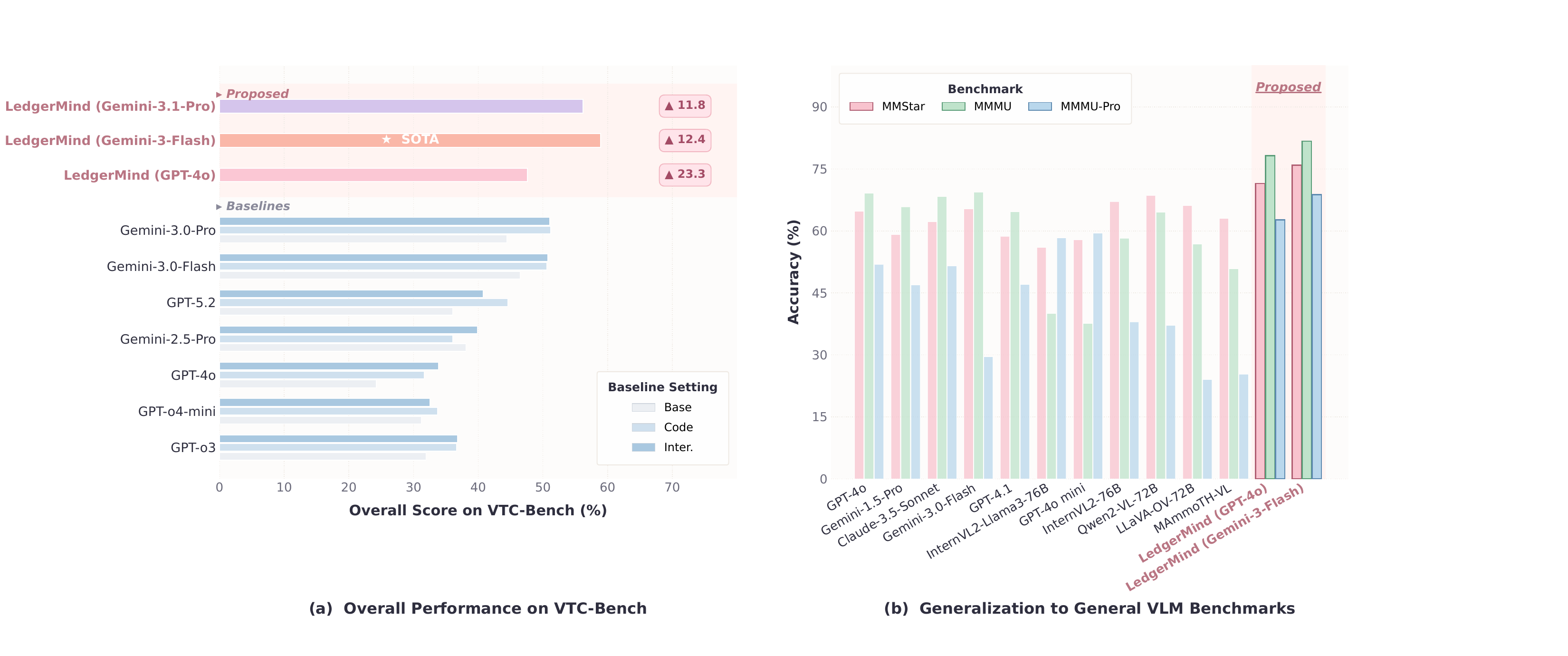}
    \caption{(a) Overall accuracy on
    VTC-Bench~\citep{zhu2026vtcbench}: \logotitle{} with
    Gemini-3-Flash reaches $58.9\%$, a new state of the art.
    (b) Generalisation to three general VLM benchmarks
    (MMStar~\citep{chen2024mmstar},
    MMMU~\citep{yue2024mmmu},
    MMMU-Pro~\citep{yue2025mmmupro}): \logotitle{} with GPT-4o
    and Gemini-3-Flash backbones outperforms 11
    baselines.}
    \label{fig:vtcbench_combined}

\end{figure*}

\paragraph{EMMA and Hard-200.}

On EMMA (Table~\ref{tab:emma_main}), \logotitle{} attains
$58.29\%$ overall ($+9.58$ over the strongest thinking-mode
baseline), with gains concentrated on Math ($+16.15$ pp) and
Physics ($+16.02$ pp)---disciplines that stress diagram
reading and multi-step symbolic derivation, and are therefore
most susceptible to F2 and F4. The $-0.18$ change on Coding
reflects that subset's multiple-choice
code--visualization alignment nature, which de-emphasizes
image-centric reasoning. On Hard-200
(Figure~\ref{fig:hard200_combined}), \logotitle{} improves
every backbone by $+11.2$ to $+19.7$ points overall, gains
hold on all three sub-sources, and the heatmap contains no
negative cell; the weakest backbone Kimi-K2.6 jumps from
$19.5\%$ to $46.0\%$ ($+26.5$) on BrowseComp-VL, the
behavior expected from entity-level grounding against
citation-backed spurious grounding (F2).

\begin{figure*}[t]
    \centering

    \includegraphics[width=0.98\textwidth]{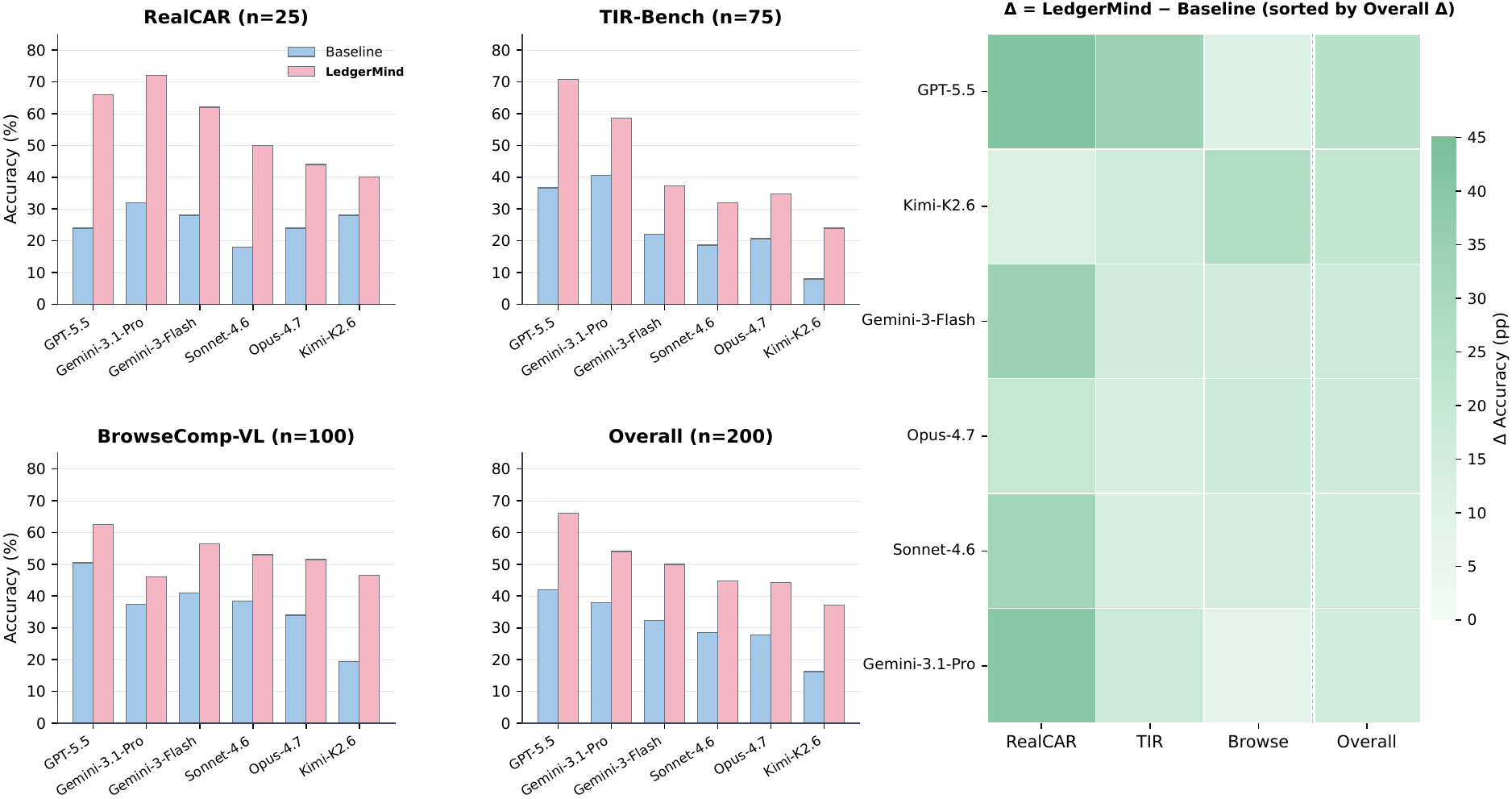}

    \caption{Results on Hard-200
    (Appendix~\ref{app:hard200}) across six frontier backbones.
    \logotitle{} improves every backbone on the overall score
    and on each of the three sub-sources (RealCAR, TIR-Bench,
    BrowseComp-VL); the right heatmap
    ($\Delta =$\,\logotitle{}$-$\,baseline) contains no
    negative cell.}
    \label{fig:hard200_combined}

\end{figure*}

\begin{table*}[t]
\centering
\small
\begin{tabular}{@{}p{0.515\textwidth}@{\hspace{0.015\textwidth}}p{0.46\textwidth}@{}}

\begin{minipage}[t]{\linewidth}
\vspace{0pt}
\centering
\begin{minipage}[t][3.2\baselineskip][t]{\linewidth}
\centering\footnotesize
\captionof{table}{Main results on EMMA~\citep{hao2025emma}. }
\label{tab:emma_main}
\end{minipage}

\footnotesize
\setlength{\tabcolsep}{3.0pt}
\renewcommand{\arraystretch}{1.10}
\resizebox{1\linewidth}{!}{
\begin{tabular}{lccccc}
\toprule
\textbf{Model} & \textbf{Math} & \textbf{Phys.} & \textbf{Chem.} & \textbf{Coding} & \textbf{Overall} \\
\midrule
Random choice
& 14.01 & 25.64 & 16.50 & 25.71 & 18.08 \\
\midrule
Claude 3.5 Sonnet
& 25.34 & 33.97 & 40.90 & 38.65 & 35.08 \\
GPT-4o
& 27.24 & 38.46 & 31.89 & 40.07 & 32.42 \\
Gemini 2.0 Flash
& 23.88 & 38.46 & 36.31 & 42.02 & 33.61 \\
Qwen2-VL-72B-Instruct
& 33.07 & 42.31 & 32.06 & 34.57 & 33.46 \\
InternVL2.5-78B
& 31.39 & 38.46 & 35.20 & 31.91 & 33.50 \\
Gemini 3.0 Flash
& 42.42 & 52.56 & 46.68 & 68.44 & 50.07 \\
Claude Opus 4.7
& 45.85 & 64.10 & 43.88 & 54.96 & 47.88 \\
Gemini 3.1 Pro
& 40.02 & 47.44 & 42.26 & 76.24 & 48.71 \\
\midrule
\logotitle{} (Ours)
& \textbf{56.17}
& \textbf{63.46}
& \textbf{50.68}
& \textbf{76.06}
& \textbf{58.29} \\
$\Delta$ (vs.\ Gemini 3.1 Pro)
& \textcolor{green}{$\uparrow 16.15$} & \textcolor{green}{$\uparrow 16.02$} & \textcolor{green}{$\uparrow 8.42$} & \textcolor{red}{$\downarrow 0.18$} & \textcolor{green}{$\uparrow 9.58$}\\
\bottomrule
\end{tabular}
}

\end{minipage}
&

\begin{minipage}[t]{\linewidth}
\vspace{0pt}
\centering
\begin{minipage}[t][3.2\baselineskip][t]{\linewidth}
\centering\footnotesize
\captionof{table}{Weighted-average comparison on
MC-Search~\citep{ning2026mcsearch}. G-F1: Golden F1.}
\label{tab:ledgermind_weighted_avg}
\end{minipage}

\footnotesize
\setlength{\tabcolsep}{2.0pt}
\renewcommand{\arraystretch}{1.32}
\resizebox{\linewidth}{!}{
\begin{tabular}{llccccc}
\toprule
\multirow{2}{*}{\textbf{Type}}
& \multirow{2}{*}{\textbf{Model}}
& \multicolumn{2}{c}{\textbf{Answer Acc.}}
& \multicolumn{2}{c}{\textbf{Chain Align.}}
& \multirow{2}{*}{\textbf{G-F1}$\uparrow$} \\
\cmidrule(lr){3-4} \cmidrule(lr){5-6}
&
& \textbf{F1}$\uparrow$
& \textbf{LJ}$\uparrow$
& \textbf{HPS}$\uparrow$
& \textbf{RD}$\downarrow$
& \\
\midrule
\textbf{Baseline}
& \textit{Best Official}
& \textit{41.78} & \textit{2.99} & \textit{31.35} & \textit{0.89} & \textit{67.52} \\
\midrule
\multirow{8}{*}{\logotitle{}}
& Claude-Opus-4.7    & \textbf{61.28} & \underline{4.08} & \textbf{57.82} & \textbf{0.54} & \textbf{76.58} \\
& GPT-5.5            & \underline{60.08} & \textbf{4.11} & \underline{55.88} & \underline{0.57} & 75.96 \\
& Claude-Sonnet-4.6  & 57.65 & 3.91 & 55.07 & 0.58 & 73.65 \\
& Gemini-2.5-Pro     & 54.48 & 3.54 & 46.22 & 0.71 & \underline{76.14} \\
& Gemini-3.1-Pro     & 52.01 & 3.52 & 48.46 & 0.69 & 70.87 \\
& GPT-4o-Mini        & 48.69 & 3.25 & 44.26 & 0.72 & 70.93 \\
& Gemini-2.5-Flash   & 44.14 & 3.10 & 41.77 & 0.78 & 68.38 \\
& Gemini-3-Flash     & 42.75 & 3.03 & 38.26 & 0.82 & 67.83 \\
\bottomrule
\end{tabular}
}
\end{minipage}
\end{tabular}

\end{table*}

\paragraph{Chain-aligned reasoning on MC-Search.}
To test whether gains come from genuinely grounded reasoning
rather than final-answer shortcuts, we evaluate on
MC-Search~\citep{ning2026mcsearch}, which scores both the
final answer (F1, LJ) and the intermediate retrieval chain
via HPS (fraction of golden steps recovered) and RD
(absolute deviation in step count), with a Golden F1 upper
bound when reference chains and retrieved content are both
provided. Table~\ref{tab:ledgermind_weighted_avg}
reports the weighted-average comparison across the official
topology distribution (topology-wise results in
Appendix~\ref{fULL_RESULTS}). \logotitle{} with
Claude-Opus-4.7 and GPT-5.5 lifts F1 from the best official
baseline's $41.78\%$ to $61.28\%/60.08\%$, raises HPS from
$31.35\%$ to $57.82\%/55.88\%$, and nearly halves RD from
$0.89$ to $0.54/0.57$. Chain-alignment gains exceed answer-F1 gains, and RD decreases while HPS
increases, which is the behavioral signature of grounded trajectories rather
than post-hoc final-answer correction, consistent with the
citation-only constraint of \S\ref{sec:ledger} and the
typed repair of \S\ref{sec:verify_repair}. Even under the
weakest Gemini-3-Flash backbone the chain metrics remain at
or above the best official baseline, indicating that the
framework, not the backbone, is the primary driver of
trajectory-level faithfulness.

\subsection{Trajectory-Level Faithfulness Diagnostics}
\label{sec:faithfulness_diag}

To check whether the gains of \logotitle{} come from grounded
reasoning rather than from coincidentally correct answers, we
run a Symmetric Reasoning Faithfulness Audit (S-RFA) that
instantiates the trajectory-level metrics of
\S\ref{sec:metrics}. For every sample we collect both a
baseline chain-of-thought trace and a \logotitle{} trace and
hand them to a fixed auditor (Gemini-3.1-Pro) that decomposes
each trace into at most ten atomic claims and labels each
claim's grounding against the image, yielding
$\mathrm{UCR}_{\text{reason}}$, GDR, R4R, and WDG under an
identical protocol on both sides. As shown in
Figure~\ref{fig:faithfulness_radar}, the \logotitle{} polygon
encloses the baseline on every backbone and every axis across
both V*Bench and EMMA-160: accuracy rises together with GDR
and R4R, while UCR and WDG both drop, so the accuracy lift is
earned by grounded trajectories rather than by post-hoc
rewriting. The shrinkage is largest on the $1-\mathrm{WDG}$
axis (most visible on EMMA-160), where baselines often produce wrong answers whose decision claims nevertheless
appear grounded,
precisely the Phantom Grounding pattern (F2,
\S\ref{sec:intro}) that the entity-level ECC of
\S\ref{sec:grounding} is built to intercept.

\begin{figure*}[t]
    \centering
    \includegraphics[width=0.99\textwidth]{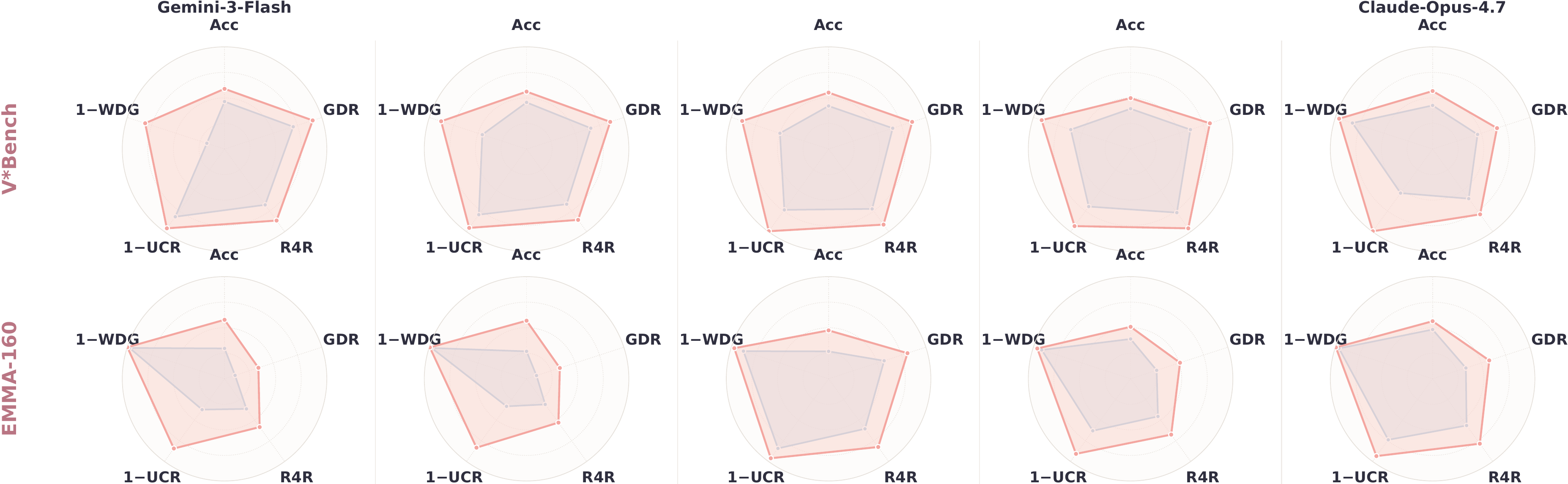}
    \caption{Symmetric Reasoning Faithfulness Audit on V*Bench
and EMMA-160 across five frontier MLLMs. \textcolor{baselineBlue}{Blue}
polygons denote baseline models, while \textcolor{ledgerMindPink}{pink}
polygons denote \logotitle{}. All axes are oriented such that larger
is better: Acc, GDR, R4R, $1-\mathrm{UCR}$, and $1-\mathrm{WDG}$.
The \logotitle{} polygon consistently encloses the baseline polygon
across benchmarks, backbones, and metrics.}
    \label{fig:faithfulness_radar}

\end{figure*}

\FloatBarrier

\subsection{Ablation Study}
\label{sec:ablation}

\begin{table}[H]
\centering
\footnotesize

\setlength{\tabcolsep}{5pt}
\renewcommand{\arraystretch}{1.1}
\caption{Ablation on MMMU-Pro with Gemini-3-Flash. All
variants share the same tool budget and decoding settings.
$\Delta$ is relative to the full framework on Overall.}

\label{tab:ablation}
\begin{tabular}{lccccr}
\toprule
\textbf{Variant} & \textbf{Easy} & \textbf{Medium} & \textbf{Hard} & \textbf{Overall} & $\bm{\Delta}$ \\
\midrule
\logotitle{} (full)         & \textbf{77.5} & \textbf{69.0} & \textbf{56.9} & \textbf{68.89} & --     \\
\midrule
w/o Ledger                  & 75.7 & 50.3 & 30.7 & 53.50 & $-15.39$ \\
w/o Typed Repair            & 76.9 & 60.3 & 38.9 & 60.40 & $-8.49$  \\
w/o ECC/NCC                 & 76.1 & 62.5 & 49.6 & 63.70 & $-5.19$  \\
w/o Dual-Read               & 76.7 & 67.2 & 52.4 & 66.70 & $-2.19$  \\
w/o Dispatcher              & 72.0 & 68.5 & 57.1 & 67.40 & $-1.49$  \\
\bottomrule
\end{tabular}

\end{table}

To show that each component of \logotitle{} is
\emph{individually necessary} rather than merely reasonable
in isolation, we ablate five variants on MMMU-Pro with
Gemini-3-Flash (Table~\ref{tab:ablation}), all sharing the
same tool budget so that any gap reflects the ablated design
choice rather than extra compute. Removing the Structured
Evidence Ledger is by far the most damaging ablation,
confirming that a prompt-level citation instruction cannot
substitute for a structural provenance constraint.
Replacing typed repair with free-form
self-reflection~\citep{shinn2023reflexion,madaan2023selfrefine}
is the second most harmful, consistent with repair-time
amplification (F4). Disabling entity- and numeric-level
grounding hurts the Hard split the most, identifying ECC/NCC
as the decisive mechanism against Phantom Grounding (F2)
rather than structural coverage alone, while Dual-Read
contributes primarily on measurement-heavy cases. Finally,
although forcing every query through \textsc{FullPipe}
changes the overall score only marginally, Easy accuracy
drops while Hard stays unchanged, the exact signature of the
Over-Reasoning Paradox (F3) and direct evidence that the
dispatcher's value lies in avoiding unnecessary depth on
simple queries rather than in adding it on hard ones.

\section{Conclusion}

We presented \logotitle{}, a training-free,
provenance-constrained runtime that uses a Structured Evidence
Ledger to coordinate claim grounding, adaptive execution, and
typed repair under a formal provenance non-amplification
guarantee. Guided by four
recurring failure patterns that aggregate accuracy obscures,
\logotitle{} consistently improves both answer accuracy and
trajectory-level faithfulness across five answer-level
multimodal benchmarks (VTC-Bench, MMStar, MMMU, MMMU-Pro,
EMMA), one chain-level multimodal search benchmark
(MC-Search), and the in-house Hard-200 stress set, over multiple
backbone MLLMs from four vendors; a symmetric reasoning audit further confirms
that the gains stem from grounded trajectories rather than
post-hoc answer rewriting. Future directions include
extending the ledger to long-horizon agents with persistent
memory, generalising the typed-repair machinery to other
modalities such as video and embodied interaction, and
exploring how ledger-level supervision can be turned into a
training signal for MLLMs.

\bibliographystyle{iclr2026_conference}
\bibliography{references}

\clearpage

\appendix

\section{Structured Reasoning Pipeline: Full Specification}
\label{app:algorithm}

This appendix gives the full specification of the reasoning
procedure run on the \textsc{FullPipe} branch of the Adaptive
Dual-Path Dispatcher (\S\ref{sec:dispatcher}). The procedure has
three stages: evidence gathering, grounded reasoning, and
decision with defense. Every branch mentioned in the main text
(Dual-Read Verification, two-round knowledge search, the
grounding cascade, hypothesis verification, and the three
defense lines) is expanded here.

\paragraph{Evidence gathering.}
The agent first produces a task plan from the question text
alone, recording observation targets, a reasoning method, and an
answer-granularity constraint (Appendix~\ref{app:granularity});
the plan is registered as a \textsc{Derivation} entry with
confidence $\sigma=1.0$. The agent then extracts up to
$N_{\max}$ Observation Claims from the image, each tagged with
a semantic category (Appendix~\ref{app:oc_categories}) and
registered as \textsc{Perception} evidence with $\sigma=0.92$.
The text pool $\mathcal{P}_\text{oc}$ used for ECC and NCC is
built from the \textsc{Text} and \textsc{Number} categories.
For reading or localization questions, the agent selects the
most informative region from a nine-zone grid, applies a
$2.5\times$ crop-zoom, and replaces the lowest-confidence
perception entry via \textsc{Supersede}. Reading questions
additionally trigger Dual-Read Verification
(Appendix~\ref{app:dualread}), which raises crop confidence to
$0.97$ on agreement and lowers it to $0.65$ on disagreement.
For knowledge-oriented questions, a first search round uses
OC-derived entities; deep-knowledge questions trigger a second
round whose query is refined using the first-round results.
All search results are appended as \textsc{Retrieval} entries
with $\sigma=0.85$.

\paragraph{Grounded reasoning.}
State Claims are generated in a three-stage
$[\text{E}]\!\to\![\text{I}]\!\to\![\text{J}]$ chain. Every
non-initial claim must cite ledger entry IDs. Non-judgment
claims with an empty citation set are dropped; judgment claims
with an empty citation set are kept but degraded to
$\sigma=0.55$. Each judgment then passes through the grounding
cascade of \S\ref{sec:grounding}: support coverage
(Eq.~\ref{eq:rho}), entity-level ECC (Eq.~\ref{eq:ecc}), and
NCC (Eq.~\ref{eq:ncc}), with confidence lowered to $0.50$ on
ECC failure and to $0.52$ on NCC failure, both below the
verification threshold $\sigma_{\mathrm{verify}}=0.6$. If all
judgments end up weak ($\sigma \le \sigma_{\mathrm{verify}}$)
and the repair budget allows, the strongest judgment is treated
as a falsifiable hypothesis: ECC failure triggers a
re-examination restricted to $\mathcal{P}_\text{oc}$ and
preceded by an explicit hallucination warning, while NCC
failure triggers a second independent reading of the crop with
the value closest to $\mathcal{P}_\text{oc}$ adopted.

\paragraph{Decision and defense.}
The Decision Claim is synthesised with the images re-transmitted
alongside the ledger, which lets the agent visually verify the
reasoning chain at decision time. Three defense lines are then
applied to the final answer. Entity recheck forces a re-answer
restricted to $\mathcal{P}_\text{oc}$ when ECC fails on
$\hat{y}$. Numeric recheck, active for reading questions, snaps
the answer to the value in $\mathrm{Num}(\mathcal{P}_\text{oc})$
closest to $v_{\hat{y}}$. Visual fallback is triggered when
$\hat{y}$ contains one of the uncertainty markers in
Appendix~\ref{app:uncertainty}; in that case the agent is
forced to answer directly from the image with a ``give a
concrete answer'' instruction. The final output is the pair
$(\hat{y}, (\mathcal{L}, \tau))$.

\begin{algorithm*}[t]
\caption{\logotitle{}: Structured Reasoning Pipeline (full
version, Part 1 of 2 --- evidence gathering and grounded
reasoning).}
\label{alg:pipeline_full}
\small
\begin{algorithmic}[1]
\Require Question $q$, images $\mathcal{I}$, repair budget
         $M=2$
\Ensure Final answer $\hat{y}$, audit trail
        $(\mathcal{L}, \tau)$ (returned in Part~2)
\State $\mathcal{L} \gets \emptyset$;\;
       $\mathcal{P}_\text{oc} \gets \emptyset$
\Statex \textcolor{gray}{\hrulefill\; \textit{Evidence gathering}
        \;\hrulefill}
\State $\pi \gets \textsc{TaskPlan}(q)$;\;
       $\mathcal{L}.\textsc{Append}(\pi,\textsc{Derivation},1.0)$
\State $\{e_i\} \gets \textsc{Observe}(q,\mathcal{I},\pi)$;
       append each as \textsc{Perception} with $\sigma=0.92$
\State $\mathcal{P}_\text{oc} \gets
       \bigcup_{\gamma_i\in\{\textsc{T},\textsc{N}\}} f_{e_i}$
\If{$q \in \mathcal{Q}_\text{read}\cup\mathcal{Q}_\text{location}$}
    \State $r^* \gets$ most informative region (9-zone grid)
    \State $e_c \gets \textsc{CropZoom}(\mathcal{I},r^*,z{=}2.5)$
    \State $\mathcal{L}.\textsc{Supersede}(
           \arg\min_{e:\kappa(e)=\textsc{Perc}}\sigma_e,\,e_c)$
    \If{$q \in \mathcal{Q}_\text{read}$}
        \State $(v_1,v_2) \gets$ two independent readings of
               $e_c$
        \State $\Delta_v \gets |v_1-v_2|/\max(v_1,v_2)$
        \State $\sigma(e_c) \gets 0.97$ if $\Delta_v\le 0.15$
               else $0.65$; adopt $v_2$ on disagreement
    \EndIf
\EndIf
\If{$q \in \mathcal{Q}_\text{knowledge}$}
    \State $\mathcal{E}_s^{(1)} \gets \textsc{WebSearch}(
           \textsc{ExtractQuery}(q,\mathcal{P}_\text{oc}))$
    \If{$q \in \mathcal{Q}_\text{deep\_knowledge}$}
        \State $\mathcal{E}_s^{(2)} \gets \textsc{WebSearch}(
               \textsc{RefineQuery}(q,\mathcal{E}_s^{(1)}))$
    \EndIf
    \State append $\mathcal{E}_s$ as \textsc{Retrieval} with
           $\sigma=0.85$
\EndIf
\Statex \textcolor{gray}{\hrulefill\; \textit{Grounded reasoning}
        \;\hrulefill}
\State $\{c_j\} \gets \textsc{Reason}(\mathcal{L},q)$ via
       $[\textbf{E}]\!\to\![\textbf{I}]\!\to\![\textbf{J}]$
\For{each SC $c_j$}
    \State $\mathrm{ground}(c_j) \gets$ resolve citations,
           require $\omega=\textsc{Active}$
    \If{$\mathrm{type}(c_j)\in\{[\textbf{E}],[\textbf{I}]\}$
        and $\mathrm{ground}(c_j)=\emptyset$}
        \State $\mathcal{L}.\textsc{Drop}(c_j)$
    \ElsIf{$\mathrm{type}(c_j)=[\textbf{J}]$ and
           $\mathrm{ground}(c_j)=\emptyset$}
        \State $\sigma(c_j) \gets 0.55$
    \EndIf
\EndFor
\For{each $[\textbf{J}]$-type claim $c$}
    \State compute $\rho(c)$ (Eq.~\ref{eq:rho}),
           $\mathrm{ECC}(c)$ (Eq.~\ref{eq:ecc}),
           $\mathrm{NCC}(c)$ (Eq.~\ref{eq:ncc})
    \State \textbf{if} $\mathrm{ECC}(c)=0$:
           $\sigma(c)\gets 0.50$; \textbf{elif}
           $\mathrm{NCC}(c)=0$: $\sigma(c)\gets 0.52$
\EndFor
\If{$\forall c\in\mathcal{J}:\sigma(c)\le
    \sigma_{\mathrm{verify}}=0.6$ and $M>0$}
    \State $c^* \gets \arg\max_{c\in\mathcal{J}}\sigma(c)$
    \If{$\mathrm{ECC}(c^*)=0$}
        \State inject hallucination warning; $c' \gets
               \textsc{ReExamine}(\mathcal{I},
               \mathcal{P}_\text{oc})$
        \State \textbf{if} $c'\neq c^*$: append $c'$ as
               \textsc{Derivation} with $\sigma=0.88$
    \ElsIf{$\mathrm{NCC}(c^*)=0$}
        \State $v' \gets$ second independent reading; adopt
               value closest to
               $\mathrm{Num}(\mathcal{P}_\text{oc})$
    \EndIf
    \State $M \gets M-1$
\EndIf
\State \textbf{continue to Part~2
       (Algorithm~\ref{alg:pipeline_full_p2})}
\end{algorithmic}
\end{algorithm*}

Part~2 of the pipeline performs the final decision and the
three defense lines described above. At this point the ledger
$\mathcal{L}$ contains all \textsc{Perception},
\textsc{Retrieval}, and \textsc{Derivation} entries produced
by the evidence-gathering and grounded-reasoning stages, and
the repair budget $M$ is what remains after any hypothesis
verification.

\begin{algorithm*}[t]
\caption{\logotitle{}: Structured Reasoning Pipeline (full
version, Part 2 of 2 --- decision and defense).}
\label{alg:pipeline_full_p2}
\small
\begin{algorithmic}[1]
\Statex \textcolor{gray}{\hrulefill\; \textit{Decision and
        defense} \;\hrulefill}
\State $\hat{y} \gets \textsc{Decide}(\mathcal{I},
       \{e_i\}_\text{OC},\{c_j\}_\text{SC},e_c,\mathcal{E}_s)$
       \Comment{images re-sent}
\If{$\mathrm{ECC}(\hat{y},\mathcal{P}_\text{oc})=0$}
    \State $\hat{y} \gets
           \textsc{ReAnswer}(q,\mathcal{P}_\text{oc},
           \text{``only use visible text''})$
    \Comment{entity recheck}
\EndIf
\If{$\mathrm{NCC}(\hat{y},\mathcal{P}_\text{oc})=0$ and
    $q\in\mathcal{Q}_\text{read}$}
    \State $\hat{y} \gets \arg\min_{v\in
           \mathrm{Num}(\mathcal{P}_\text{oc})}|v-v_{\hat{y}}|$
    \Comment{numeric recheck}
\EndIf
\If{$\hat{y}\in\mathcal{U}$}
    \State $\hat{y} \gets \textsc{DirectVisual}
           (q,\mathcal{I},$
    \Statex \hspace{\algorithmicindent}
           $\text{``give a concrete answer''})$
    \Comment{visual fallback}
\EndIf
\State \Return $\hat{y}$, audit trail $(\mathcal{L},\tau)$
\end{algorithmic}
\end{algorithm*}

\FloatBarrier

\section{Evidence Ledger: Full Schema and Operations}
\label{app:ledger_full}

This appendix expands the 11-field entry schema and the four
lifecycle operations introduced in \S\ref{sec:ledger}. The
dependency graph supports the localized re-checking mentioned
in the main text, so that when an entry changes status only
the affected claims need to be re-evaluated.

\subsection{Evidence Entry Schema}
\label{app:schema}

Each entry $e \in \mathcal{L}_t$ is the following 11-field
tuple:
\begin{equation}
    e = \bigl(\mathrm{id},\mathrm{src},\kappa,t_e,f_e,\sigma_e,
    \omega_e,b_e,\ell_e,\xi_e,\mathcal{D}_e\bigr),
    \label{eq:entry_full}
\end{equation}
where $id$ is a unique identifier, $src$ the tool, $\kappa$ the epistemic
type, $t_e$ the creation timestamp, $f_e$ the rule-normalized fact given by
$f_e=\mathcal{M}(o_t)$, $\sigma_e\in[0,1]$ the confidence, $\omega_e$ the
lifecycle status, $b_e$ an optional spatial bounding box, $\ell_e$ the
time-to-live parameter specified by the TTL policy, $\xi_e$ the superseding
entry ID, and $D_e$ the reverse dependency set of claims that cite $e$.

\paragraph{Template mapping examples.}
The deterministic mapping
$\mathcal{M}: \mathcal{O} \to \Sigma^*$ is tool-specific:
\begin{center}
\small
\begin{tabular}{lll}
\toprule
\textbf{Tool} & \textbf{Fact $f_e$} & \textbf{Metadata} \\
\midrule
OCR & recognized\_text & $b_e =$ detection\_bbox \\
Web search & snippet & $\mathrm{src} =$ URL \\
Crop & crop\_metadata & $b_e =$ crop\_region \\
VQA & model\_answer & $\kappa =$ \textsc{Derivation} \\
\bottomrule
\end{tabular}
\end{center}

\subsection{Lifecycle Operations}
\label{app:ops}

The four operations form a finite state machine over
$\Omega = \{\textsc{Active}, \textsc{Stale},
\textsc{Conflicted}, \textsc{Dropped}\}$:
\begin{align}
    &\textsc{Append}(e_{\text{new}}): \nonumber\\[-2pt]
        & \mathcal{L}_{t+1} = \mathcal{L}_t \cup
          \{e_{\text{new}}\},\;
          \omega_{e_{\text{new}}} \gets \textsc{Active},
          \label{eq:append_full}\\
    &\textsc{MarkStale}(e): \nonumber\\[-2pt]
        & \omega_e \gets \textsc{Stale} \;\;\text{iff}\;\;
          (t_{\text{curr}} - t_e) > \ell_e, \nonumber\\
        & \forall c \in \mathcal{D}_e:\; \text{trigger
          re-verification of } c,
          \label{eq:stale_full}\\
    &\textsc{Supersede}(e_{\text{old}},e_{\text{new}}): \nonumber\\[-2pt]
        & \omega_{e_{\text{old}}} \gets \textsc{Stale},\;
          \xi_{e_{\text{old}}} \gets \mathrm{id}(e_{\text{new}}),
          \nonumber\\
        & \mathcal{L}_{t+1} = \mathcal{L}_t \cup
          \{e_{\text{new}}\},
          \label{eq:supersede_full}\\
    &\textsc{Drop}(e): \nonumber\\[-2pt]
        & \omega_e \gets \textsc{Dropped},\;
          \nonumber\\[-2pt]
        & \forall c \in \mathcal{D}_e:\; \text{propagate
          invalidation}.
          \label{eq:drop_full}
\end{align}

\subsection{Dependency Graph}
\label{app:depgraph}

The bipartite graph
$G_t = (V_E, V_C, E_{\mathrm{link}})$ is built as
$V_E = \{\mathrm{id}(e) : e \in \mathcal{L}_t\}$,
$V_C = \{\mathrm{id}(c) : c \in \bigcup_{t'}
\mathcal{C}_{t'}\}$, and $(e_i, c_j) \in E_{\mathrm{link}}
\iff \mathrm{id}(e_i) \in \mathrm{ground}(c_j)$. The localized
impact set satisfies $|\mathrm{Affected}(e)| \le
\Delta_{\max} \ll |\mathcal{C}_\tau|$, where $\Delta_{\max}$
is the maximum evidence-node degree. When $\omega_e$ changes,
only the claims in $\mathrm{Affected}(e)$ need to be
re-audited.

\section{Grounding Threshold Sensitivity}
\label{app:thresholds}

The confidence degradation values ($0.50$ for ECC failure,
$0.52$ for NCC failure, $0.55$ for structural ungrounding)
were chosen by grid search over
$\{0.40, 0.45, 0.50, 0.55, 0.60\}$ on a held-out development
set of 50 questions. The key requirement is that all values
fall below the verification threshold
$\sigma_{\mathrm{verify}} = 0.6$ so that hypothesis
verification is triggered. Using separate thresholds for ECC
and NCC gives a useful diagnostic signal: when both fail at
the same time, the lower ECC threshold causes entity-level
repair to take priority over numeric repair.

For NCC, the tolerance is type-aware. Years, dates, counts, option labels, and
identifiers use exact matching after normalization. The relative tolerance
$\delta_{\mathrm{read}}=0.15$ is used only for continuous visual measurements,
such as thermometer or gauge readings, to absorb small perceptual noise
(e.g., 36.7 versus 36.8). Performance is stable for
$\delta_{\mathrm{read}}\in[0.10,0.20]$ on measurement-style questions and
degrades outside this range. Thus the numeric check remains exact for symbolic
or categorical numbers while remaining tolerant to visual-reading noise.

\section{Dispatcher and Pipeline Details}
\label{app:dispatcher_pipeline}

\subsection{Complexity Classification}
\label{app:classify}

The classifier $\phi(q)$ in Eq.~\ref{eq:dispatch} routes a
question to \texttt{complex} if at least one of the following
conditions holds, and to \texttt{simple} otherwise.

\paragraph{Condition 1: precise perception.}
The question contains keywords indicating precise visual
reading, counting, measurement, or localization. English:
\texttt{``how many''}, \texttt{``count''},
\texttt{``temperature''}, \texttt{``thermometer''},
\texttt{``reading''}, \texttt{``measurement''},
\texttt{``gauge''}, \texttt{``scale''}, \texttt{``meter''},
\texttt{``locate''}, \texttt{``identify the region''}.

\paragraph{Condition 2: visual + external knowledge fusion.}
The answer requires combining visual evidence with external
facts. English: \texttt{``this company''},
\texttt{``this game''}, \texttt{``this film''},
\texttt{``this person''}, \texttt{``this artist''},
\texttt{``this album''}, \texttt{``this university''},
\texttt{``according to''}, \texttt{``what year''},
\texttt{``was the first''}, \texttt{``which country''},
\texttt{``founded''}, \texttt{``how many students''}.
Chinese (pinyin): \texttt{na bu} (which), \texttt{shenme pinpai}
(what brand), \texttt{shi shei} (who), \texttt{shenme donghua}
(what animation), and \texttt{shenme zhiwu} (what plant).

\paragraph{Condition 3: verification or multi-entity
reasoning.}
The question asks to verify a historical event, identify an
artifact, compare multiple entities, resolve conflicting
evidence, or find differences. Keywords include
\texttt{``verify''}, \texttt{``identify the artifact''},
\texttt{``historical event''}, \texttt{``compare''},
\texttt{``spot the difference''}, as well as any question
containing both a visual identification term and an external
fact-check term.

\paragraph{Condition 4: MCQ with visual grounding.}
A question is MCQ if the regex
\texttt{/\textbackslash b([A-F])[.)\textbackslash s]+\textbackslash S/}
matches at least two distinct option letters.

\paragraph{Condition 5: non-English.}
Let $\alpha(q) = |\{c \in q : c \in
\text{ASCII}_\text{alpha}\}| / |q|$. If $\alpha(q) \le 0.5$,
the question is treated as non-English and routed to
\texttt{complex}.

\paragraph{Deep-knowledge sub-keywords
($\mathcal{Q}_{\text{deep\_knowledge}}$).}
Within Condition 2, the following markers additionally
trigger a second round of knowledge search:
\texttt{``according to''}, \texttt{``based on''},
\texttt{``historically''}, \texttt{``originally''}.

\subsection{Answer Granularity Constraints}
\label{app:granularity}

\begin{table*}[t]
\centering
\small
\caption{Dynamic granularity constraints $\mathcal{F}(q)$ by
question type.}
\begin{tabular}{ll}
\toprule
\textbf{Question Type} & \textbf{Constraint} \\
\midrule
MCQ & ``Output only one option letter (A/B/C/D/E/F)'' \\
Instrument reading ($\mathcal{Q}_\text{read}$) &
    ``Precise to smallest graduation, preserve all decimals'' \\
Location ($\mathcal{Q}_\text{location}$) &
    ``Most specific toponym / building / landmark name'' \\
Spot-the-difference ($\mathcal{Q}_\text{diff}$) &
    ``List all differences dimension by dimension'' \\
Character counting ($\mathcal{Q}_\text{char}$) &
    ``Locate each character individually, then count'' \\
Other & ``Answer must be as specific as possible'' \\
\bottomrule
\label{app:D2}
\end{tabular}
\end{table*}

Given in Table~\ref{app:D2}.

\subsection{OC Semantic Categories}
\label{app:oc_categories}

\begin{table*}[t]
\centering
\small
\caption{OC category definitions and extraction instructions.}
\begin{tabular}{lp{9.5cm}}
\toprule
\textbf{Category} & \textbf{Instruction} \\
\midrule
\textsc{Text} &
    Transcribe all visible text character by character \\
\textsc{Number} &
    Extract all numerical values with units and full decimal
    precision \\
\textsc{Object} &
    Describe objects: appearance, color, shape, material,
    quantity \\
\textsc{Spatial} &
    Describe positional / arrangement relationships between
    objects \\
\textsc{Compare} &
    For comparison tasks: extract quantifiable attributes per
    object \\
\textsc{Count} &
    Scan zone by zone; assign a unique ID to each counted
    target \\
\textsc{Diff} &
    Compare two images along 5 dimensions: object presence,
    color, shape, position, text \\
\bottomrule
\end{tabular}
\label{app:D3}
\end{table*}

Given in Table~\ref{app:D3}.

\subsection{Reasoning Method Adaptation}
\label{app:reason_method}

\begin{table*}[t]
\centering
\small
\caption{Dynamic reasoning method $\mathcal{M}(q)$ adopted in
grounded reasoning.}
\begin{tabular}{ll}
\toprule
\textbf{Question Type} & \textbf{Method} \\
\midrule
MCQ & Per-option analysis with elimination \\
Counting & Verify each numbered target in OCs; confirm total \\
Comparison & Extract attribute values per object; compare
pairwise \\
Reading & Prefer crop evidence; read the precise value \\
Location & Prefer \textsc{Text}-category OCs for
identification \\
Find-difference & Dimension-wise listing from \textsc{Diff}
OCs \\
\bottomrule
\end{tabular}
\label{app:D4}
\end{table*}

Given in Table~\ref{app:D4}.

\subsection{Dual-Read Verification}
\label{app:dualread}

For reading tasks ($\mathcal{Q}_\text{read}$), after the
$2.5\times$ crop-zoom we issue two independent readings
$v_1, v_2$ of the same crop using two API calls that do not
share context. Let $\Delta_v=\frac{|v_1-v_2|}{\max(|v_1|,|v_2|,\epsilon)}$. We set
$\sigma(e_c) = 0.97$ if $\Delta_v \le 0.15$ (two readings
agree) and $\sigma(e_c) = 0.65$ otherwise (disagreement
triggers hypothesis verification). On disagreement we adopt
$v_2$, because in our trajectories an independent replication
is empirically more reliable than the first reading. Dual-Read
Verification therefore provides a second, perception-level
line of defense against the numeric sub-case of Phantom
Grounding.

\subsection{Knowledge Search Strategy}
\label{app:search}

\paragraph{Dual-engine redundancy.}
Primary: Serper.dev (Google Search API); fallback: SerpAPI
(activated on timeout or error from the primary). Both use
exponential backoff with jitter (at most 5 retries, 32~s
cap).

\paragraph{Result structure.}
Each search call returns three components: Answer Box (direct
factual answer), Knowledge Graph (structured entity
information), and Organic Results (top-3 snippets with URLs).
The three parts are concatenated and truncated to 600
characters for the re-answering prompt.

\paragraph{Two-round strategy.}
Round~1 builds the query from OC entities and question
keywords. Round~2 is only run for deep-knowledge queries,
using Round~1 results to refine the query.

\subsection{Uncertainty Markers}
\label{app:uncertainty}

The set $\mathcal{U}$ that triggers visual fallback:

\begin{center}
\small
\begin{tabular}{lp{0.67\columnwidth}}
\toprule
\textbf{Language} & \textbf{Markers} \\
\midrule
English & ``cannot determine'', ``cannot tell'', ``unable'',
          ``not sure'', ``unclear'' \\
System  & ``api\_error'', \texttt{unknown}, \texttt{need\_more} \\
\bottomrule
\end{tabular}
\end{center}

\section{Event Triggers and Typed Repair}
\label{app:verify_repair_full}

\subsection{Trigger Conditions}
\label{app:triggers}

Let $s_t = (a_t, o_t, \mathcal{C}_t)$ be the current step. The
verifier fires whenever any of the following holds:
\begin{align}
    \textbf{T1:}\;\; & o_t = \emptyset \;\vee\;
    \texttt{``error''} \in o_t
    \label{eq:t1}\\
    \textbf{T2:}\;\; & \exists\, \mathrm{id} \in
    \mathrm{ground}(c_t):
    t_{\text{curr}}-t_{\mathrm{id}}>\ell_{\mathrm{id}}
    \label{eq:t2}\\
    \textbf{T3:}\;\; & \exists\, e, e' \in
    \mathcal{L}_t^{\textsc{A}}:
    \mathrm{IoU}(b_e,b_{e'})>0.5
    \;\wedge\; f_e\neq f_{e'}
    \label{eq:t3}\\
    \textbf{T4:}\;\; & \sigma(e_t) <
    \sigma_{\mathrm{floor}}
    \label{eq:t4}\\
    \textbf{T5:}\;\; & \mathrm{type}(c_t) = \textsc{DC}
    \;\wedge\; \rho(c_t) < \beta
    \label{eq:t5}\\
    \textbf{T6:}\;\; & \rho(c_t) > 0 \;\wedge
    (\mathrm{ECC}(c_t)=0\;\vee\;\mathrm{NCC}(c_t)=0)
    \label{eq:t6}
\end{align}
T1--T6 denote Tool Anomaly, Stale Reference, Conflict,
Confidence Drop, Unsupported Decision, and Phantom
Grounding, respectively.
with decision coverage threshold $\beta = 0.15$, hard
confidence floor $\sigma_{\mathrm{floor}} = 0.3$ (used only
in T4), and grounding verification threshold
$\sigma_{\mathrm{verify}} = 0.6$ (used in the main-text
cascade of \S\ref{sec:grounding}). These two thresholds are
distinct: $\sigma_{\mathrm{floor}}$ marks an outright tool or
evidence failure, while $\sigma_{\mathrm{verify}}$ marks a
grounding-level weakness that calls for hypothesis
verification.

Once triggered, the verifier runs a four-step protocol:
(1)~a grounding audit that computes $\rho(c)$ for all
$c \in \mathcal{C}_t$;
(2)~a conflict scan that looks for contradictions in
$\mathcal{L}_t^{\textsc{Active}}$;
(3)~a staleness scan that checks TTL for all cited entries;
(4)~a repair recommendation that selects an operator using
the policy below.

\subsection{Repair Operator Definitions}
\label{app:repair}

The operator set is
$\mathcal{R} = \mathcal{R}_E \cup \mathcal{R}_A \cup
\mathcal{R}_T$:
\begin{align}
    \mathcal{R}_E &= \{\textsc{Drop}(e),\;
    \textsc{Refresh}(e, \theta')\}, \\
    \mathcal{R}_A &= \{\textsc{Retry}(a, \theta'),\;
    \textsc{Switch}(a \!\to\! a'),\;
    \textsc{Acquire}(q_{\text{spec}})\}, \\
    \mathcal{R}_T &= \{\textsc{StopAndAnswer}(e^*),\;
    \textsc{Abstain}\}.
\end{align}
These sets act at the evidence, action, and trajectory
layers, respectively.

\paragraph{Operator semantics.}
\textsc{Drop}$(e)$ sets $\omega_e \gets \textsc{Dropped}$,
creates no new entries, and propagates invalidation to every
claim in $\mathcal{D}_e$. \textsc{Refresh}$(e, \theta')$
marks $e$ as \textsc{Stale} and issues a new tool call with
updated parameters $\theta'$ (e.g., a different crop region
or a refined search query); the result enters via
\textsc{Supersede}$(e, e_{\text{new}})$.
\textsc{Retry}$(a, \theta')$ re-invokes the most recent
failed action with modified arguments: bbox perturbation
$\pm[15, 40]$ px for visual tools, query expansion with
extra keywords for search. \textsc{Switch}$(a \to a')$ swaps
the tool via a predefined compatibility map
$\mathcal{M}_{\text{compat}}$ (OCR $\to$ \{search, VQA\};
local crop $\to$ \{full-image VQA\}).
\textsc{Acquire}$(q_{\text{spec}})$ proactively gathers a
new piece of evidence to close a specific grounding gap for
a Decision Claim. \textsc{StopAndAnswer}$(e^*)$ terminates
execution with $\hat{y} = f_{e^*}$ and $e^* =
\arg\max_{e \in \mathcal{L}^{\textsc{A}}} \sigma_e$.
\textsc{Abstain} outputs $\hat{y} = \bot$ when evidence
remains insufficient after the repair budget is exhausted.

\paragraph{Repair policy.}
The mapping
$\Pi: \{T_1, \dots, T_6\} \to \mathcal{R}$ follows a
locality-first escalation:
\begin{align}
    \Pi(T_1) &= \textsc{Retry}\to\textsc{Switch}, \nonumber\\
    \Pi(T_2) &= \textsc{Refresh}\text{ (if DC dep.)}
                 \;/\;\textsc{Drop}, \nonumber\\
    \Pi(T_3) &= \textsc{Drop}\!\left(
        \arg\min_{e\in\mathrm{conflict}}\sigma_e\right),
        \nonumber\\
    \Pi(T_4) &= \textsc{Retry}(\theta'=\theta+\Delta\theta),
        \nonumber\\
    \Pi(T_5) &= \textsc{Acquire}\text{ (if }M>0\text{)}
                 \;/\;\textsc{Abstain}, \nonumber\\
    \Pi(T_6) &= \text{warning injection}
                 +\textsc{Acquire}.
    \label{eq:policy_full}
\end{align}

\paragraph{Repair budget and cost bound.}
Each trigger allows up to $M_0 = 2$ repairs. The total
trajectory cost is
\begin{equation}
    \mathrm{RC}(\tau)
    = \sum_{t=1}^{T}\sum_{r\in\mathcal{R}_t}
    \lambda_{\mathrm{type}(r)}\,\mathrm{cost}(r)
    \le T\,M_0\,c_{\max},
    \label{eq:rcbound}
\end{equation}
where $c_{\max} = \max_r \lambda_r \!\cdot\! \mathrm{cost}(r)$.

\section{Full Metric Definitions}
\label{app:metrics_full}

In addition to UCR$_\text{reason}$, GDR, R4R, and WDG defined
in Eqs.~\ref{eq:metrics}--\ref{eq:wdg}, we report the
following diagnostic metrics. Write
$c\nsim o_{\mathrm{tool}}$ when a claim disagrees with its
tool output, and let $\mathrm{GroundedAfter}(r)$ indicate
that the post-repair claim is grounded.

\begin{definition}[OC Error Rate]
\begin{equation}
    \mathrm{OCErr}(\tau) =
    \frac{|\{c\in\mathcal{C}_\tau^{\textsc{OC}}:
    c\nsim o_{\mathrm{tool}}\}|}
    {|\mathcal{C}_\tau^{\textsc{OC}}|}
    \;\;(\downarrow).
\end{equation}
OCErr isolates perception-level errors so that
UCR$_\text{reason}$ can focus on reasoning faithfulness
without being confounded by tool noise.
\end{definition}

\begin{definition}[Evidence Utilization Rate (EUR)]
\begin{equation}
    \mathrm{EUR}(\tau) = \frac{|\{e \in \mathcal{L}_\tau :
    \mathcal{D}_e \neq \emptyset\}|}{|\mathcal{L}_\tau|}
    \;\;(\uparrow).
\end{equation}
A low EUR indicates over-collection of unused evidence.
\end{definition}

\begin{definition}[Recovery Rate (RR)]
\begin{equation}
    \mathrm{RR}(\tau) =
    \frac{|\{r\in\mathcal{R}_\tau:
    \mathrm{GroundedAfter}(r)\}|}
    {|\mathcal{R}_\tau|} \;\;(\uparrow).
\end{equation}
\end{definition}

\begin{definition}[Repair Cost (RC)]
\begin{equation}
    \mathrm{RC}(\tau) = \sum_{r \in \mathcal{R}_\tau}
    \lambda_{\mathrm{type}(r)} \!\cdot\! \mathrm{cost}(r),
\end{equation}
bounded by Eq.~\ref{eq:rcbound}.
\end{definition}

\begin{definition}[Step Efficiency (SE)]
\begin{equation}
    \mathrm{SE}(\tau) = \frac{S(\tau)}{|\tau|}
    \;\;(\uparrow).
\end{equation}
\end{definition}

\paragraph{Diagnostic rule.}
A high WDG combined with a low ECC pass rate indicates that
Phantom Grounding is the dominant failure mode. A high WDG
combined with a high ECC pass rate instead indicates a
genuine capability limit rather than hallucination.

\section{Prompt Templates}
\label{app:prompts}

We include representative prompt templates for the main
stages. The complete templates are available in the
supplementary code.

\subsection{Task Planning}
\begin{tcolorbox}[colback=gray!5, colframe=gray!50,
title=Task Planning Prompt, fonttitle=\small\bfseries]
\small
You are a multimodal QA reasoning planner. Given ONLY the
question text (no image yet), produce a structured analysis
plan:\\[3pt]
\textbf{Question}: \{$q$\}\\[3pt]
Output the following fields:\\
1. \textbf{Observation targets}: What specific visual elements
to look for.\\
2. \textbf{Reasoning method}: How to derive the answer from
observations.\\
3. \textbf{Judgment criteria}: What evidence would confirm or
refute each candidate.\\
4. \textbf{Answer form}: Expected format
(number / name / description / letter).\\
5. \textbf{Answer granularity}: \{dynamically inserted from
$\mathcal{F}(q)$\}.
\end{tcolorbox}

\subsection{Observation Claim Extraction}
\begin{tcolorbox}[colback=gray!5, colframe=gray!50,
title=OC Extraction Prompt (abbreviated),
fonttitle=\small\bfseries]
\small
Look at the image carefully and extract up to 15 fine-grained
observation claims.\\
Each claim must be tagged with exactly one category:\\
\texttt{[OC-$i$] CATEGORY | factual description}\\[3pt]
Categories: TEXT (visible text, transcribe exactly), NUMBER
(values with units), OBJECT (appearance / color / shape),
SPATIAL (position relationships), plus any additional
categories injected per task plan.
\end{tcolorbox}

\subsection{State Claim Reasoning}
\begin{tcolorbox}[colback=gray!5, colframe=gray!50,
title=SC Reasoning Prompt (abbreviated),
fonttitle=\small\bfseries]
\small
Based on the observations below, reason step by step using
the [E]/[I]/[J] structure:\\[3pt]
\textbf{[E] Evidence Integration}: Synthesize patterns across
observations.\\
\textbf{[I] Inference}: Derive logical conclusions from [E].\\
\textbf{[J] Judgment}: Give your definitive answer. You MUST
cite specific OC numbers.\\[3pt]
Format:
\texttt{[SC-$j$][E/I/J] conclusion | Based on: [OC-$a$],
[OC-$b$], ...}
\end{tcolorbox}

\subsection{Hypothesis Verification (ECC Failure)}
\begin{tcolorbox}[colback=red!3, colframe=red!40,
title=Hypothesis Verification Prompt (ECC),
fonttitle=\small\bfseries]
\small
\textbf{Question}: \{$q$\}\\
\textbf{Current hypothesis}: \{$c^*$\}\\[3pt]
$\triangleright$ \textbf{WARNING}: The key entities in your
hypothesis do NOT appear in any visible text from the image.
This conclusion may be a language model hallucination.\\[3pt]
\textbf{Existing observations}:
\{$\mathcal{P}_{\text{oc}}$ contents\}\\[3pt]
Treat your hypothesis as falsifiable. Re-examine the image
and report:\\
1. \textbf{Supporting evidence}: specific visual details that
support the hypothesis.\\
2. \textbf{Contradicting evidence}: specific visual details
that contradict it (or ``None'').\\
3. \textbf{Conclusion}: Confirm \{$c^*$\} or revise to
\{new answer\}.\\[3pt]
\textbf{You may ONLY use text and numbers actually visible in
the image.}
\end{tcolorbox}

\subsection{Entity Recheck Defense}
\begin{tcolorbox}[colback=gray!5, colframe=gray!50,
title=Entity Recheck Prompt,
fonttitle=\small\bfseries]
\small
The following text / numbers were confirmed visible in the
image:\\
\{$\mathcal{P}_{\text{oc}}$ formatted list\}\\[3pt]
\textbf{Question}: \{$q$\}\\[3pt]
Answer the question using ONLY the visible text / numbers
listed above. Do NOT introduce any entity, name, or number
that does not appear in the list above.
\end{tcolorbox}

\subsection{Direct Path: Concise Re-answering}
\begin{tcolorbox}[colback=gray!5, colframe=gray!50,
title=Concise Re-answering Prompt,
fonttitle=\small\bfseries]
\small
\textbf{System}: Answer in as few words as possible. No
explanations. Just the fact.\\[3pt]
\textbf{User}: Question: \{$q$\}\\
Your initial answer: \{$\hat{y}_0$\}\\
Web search results: \{truncated to 600 chars\}\\[3pt]
Using the search results to verify or correct your answer,
give the FINAL answer in as few words as possible.
\end{tcolorbox}

\section{Hard-200 Dataset Construction}
\label{app:hard200}

 We construct Hard-200 using committee-based hardness mining with a heterogeneous panel of frontier MLLMs from multiple vendors. Rather than manually selecting difficult examples, we first score candidate instances by cross-model failure consistency and then perform diversity-constrained final selection.

\paragraph{Candidate pools.}
Our candidate pool consists of 399 BrowseComp-VL examples from WebWatcher \cite{geng2025webwatcher}, 1,215 TIR-Bench examples \cite{li2025tirbench}, and 25 self-constructed RealCAR examples. RealCAR, short for \emph{Real-world Complex Agentic Reasoning}, is designed to cover high-hardness real-world multimodal reasoning cases that are underrepresented in existing public benchmarks. The final Hard-200 set contains 100 examples from BrowseComp-VL, 75 examples from TIR-Bench, and 25 examples from RealCAR.

\paragraph{Committee-based hardness scoring.}
We evaluate each candidate instance using a committee of strong MLLMs from multiple vendors. For each model, we collect its answer under the standard multimodal setting and normalize the output before correctness verification. We then aggregate results at the vendor level to avoid over-counting multiple models from the same provider. For a candidate instance $x$, we define its vendor-level failure rate as
\[
F(x)=\frac{1}{|V|}\sum_{v \in V}\mathbb{1}[\text{vendor } v \text{ fails on } x],
\]
where $V$ denotes the set of vendors in the committee. Instances with higher cross-vendor failure rates are treated as harder candidates.

\paragraph{BrowseComp-VL selection.}
For BrowseComp-VL, we prioritize examples that require image-grounded retrieval, multi-hop aggregation, and evidence composition. We allocate 75 of the 100 BrowseComp-VL slots to Level 2 and the remaining 25 to Level 1, reflecting the substantially greater compositional complexity of Level 2. Final selection is performed under domain-level diversity constraints to avoid concentration in a small number of topics.

\paragraph{TIR-Bench selection.}
For TIR-Bench, we do not sample uniformly across all task families. Instead, we restrict selection to visually demanding tasks that stress fine-grained perception and structured reasoning, including \texttt{spot\_difference}, \texttt{maze}, \texttt{jigsaw}, \texttt{math}, \texttt{symbolic}, \texttt{refcoco}, \texttt{instrument}, and \texttt{word\_search}. Within each task family, we prioritize examples with higher committee failure rates and apply task-level caps to preserve diversity.

\paragraph{RealCAR construction.}
RealCAR is a self-constructed set of real-world multimodal reasoning problems. It is not obtained by directly reusing existing benchmark instances. Instead, we construct RealCAR to target cases where answering the question requires more than isolated visual recognition or surface-level text extraction. Each instance is designed to require multi-step reasoning over visual evidence, implicit constraints, and compositional evidence integration.

To build RealCAR, we first collect candidate real-world visual cases and annotate each case with a question, a verified answer, and a minimal evidence rationale. We then remove instances that are ambiguous, subjective, unverifiable, or answerable through a single local perception shortcut. The remaining candidates are evaluated by the same committee-based hardness pipeline, and we retain 25 examples with consistently high cross-vendor failure rates.

\paragraph{Evaluation scoring protocol.}
Answers on Hard-200 are graded by a locally deployed
LLM-based judge against the verified reference answer and its
minimal evidence rationale. The judge assigns each response a
score in $\{0,0.5,1\}$: $1$ for a fully correct answer, $0$
for an incorrect answer, and $0.5$ for a partially correct
answer to a question that requires two or more answer
elements. For example, if a question asks for multiple
differences between two images and the response correctly
identifies only a subset of the reference differences, it
receives $0.5$ rather than being treated as entirely
incorrect. The reported Hard-200 score is
\begin{equation}
    \mathrm{Score}_{\mathrm{Hard\text{-}200}}
    = \frac{100}{N}\sum_{i=1}^{N}s_i,
    \qquad s_i\in\{0,0.5,1\}.
\end{equation}
Consequently, subset scores can correspond to fractional
effective counts; for example, a score of $38.0\%$ on the
25-instance RealCAR subset represents a total credit of
$9.5$ rather than an impossible fractional number of
evaluated instances.

\paragraph{Final selection.}
After computing hardness scores, we perform diversity-constrained final selection to satisfy the target source composition of 100 BrowseComp-VL, 75 TIR-Bench, and 25 RealCAR examples. This process avoids degenerate top-$k$ selection dominated by a narrow set of domains or task types. As a result, Hard-200 combines public benchmark difficulty with a compact self-constructed real-world challenge component, enabling evaluation of both standardized multimodal reasoning ability and robustness to realistic complex reasoning cases.

\section{Full Results}
\label{fULL_RESULTS}

\subsection{Full Results of Hard-200}
\label{sec:hard200}

\begin{table*}[t]
    \centering
    \caption{Performance of \logotitle{} with absolute improvements over baseline (pp). Each cell shows \textit{LedgerMind score ($\Delta$ over baseline)}.}
    \label{tab:merged_results}
    \resizebox{\linewidth}{!}{
    \begin{tabular}{lcccc}
    \toprule
    \textbf{Model} & \textbf{RealCAR} & \textbf{TIR-Bench} & \textbf{BrowseComp-VL} & \textbf{Overall} \\
    \midrule
    gpt-5.5
    & \underline{38.0}\VTCDelta{\textcolor{green}{\uparrow 14.0}}
    & \textbf{69.33}\VTCDelta{\textcolor{green}{\uparrow 32.67}}
    & \textbf{62.0}\VTCDelta{\textcolor{green}{\uparrow 11.5}}
    & \textbf{61.75}\VTCDelta{\textcolor{green}{\uparrow 19.74}} \\

    gemini-3.1-pro
    & \textbf{54.0}\VTCDelta{\textcolor{green}{\uparrow 22.0}}
    & \underline{57.33}\VTCDelta{\textcolor{green}{\uparrow 16.67}}
    & 45.5\VTCDelta{\textcolor{green}{\uparrow 8.0}}
    & \underline{51.00}\VTCDelta{\textcolor{green}{\uparrow 12.99}} \\

    gemini-3-flash
    & \textbf{54.0}\VTCDelta{\textcolor{green}{\uparrow 26.0}}
    & 37.33\VTCDelta{\textcolor{green}{\uparrow 15.33}}
    & \underline{56.5}\VTCDelta{\textcolor{green}{\uparrow 15.5}}
    & 49.00\VTCDelta{\textcolor{green}{\uparrow 16.75}} \\

    claude-sonnet-4-6
    & 30.0\VTCDelta{\textcolor{green}{\uparrow 12.0}}
    & 30.67\VTCDelta{\textcolor{green}{\uparrow 12.00}}
    & 49.0\VTCDelta{\textcolor{green}{\uparrow 10.5}}
    & 39.75\VTCDelta{\textcolor{green}{\uparrow 11.25}} \\

    claude-opus-4-7
    & 32.0\VTCDelta{\textcolor{green}{\uparrow 8.0}}
    & 36.67\VTCDelta{\textcolor{green}{\uparrow 16.00}}
    & 54.0\VTCDelta{\textcolor{green}{\uparrow 20.0}}
    & 44.75\VTCDelta{\textcolor{green}{\uparrow 17.00}} \\

    kimi-k2.6
    & 32.0\VTCDelta{\textcolor{green}{\uparrow 4.0}}
    & 24.00\VTCDelta{\textcolor{green}{\uparrow 16.00}}
    & 46.0\VTCDelta{\textcolor{green}{\uparrow 26.5}}
    & 36.00\VTCDelta{\textcolor{green}{\uparrow 19.75}} \\

    \bottomrule
    \end{tabular}
    }
    \label{tb:full_hard200}
    \end{table*}

Full results of hard-200 is given in Table~\ref{tb:full_hard200}.

\subsection{Full Results of VLM benchmarks}
\begin{table*}[t]
\centering
\small
\begin{tabular}{lccc}
\toprule
\textbf{Model} & \textbf{MMStar} & \textbf{MMMU} & \textbf{MMMU-Pro} \\
\midrule
GPT-4o & 64.7 & 69.1 & 51.9 \\
Gemini-1.5-Pro & 59.1 & 65.8 & 46.9 \\
Claude-3.5-Sonnet & 62.2 & 68.3 & 51.5 \\
Gemini-3.0-Flash & 65.3 & 69.4 & 29.5 \\
GPT-4.1 & 58.7 & 64.6 & 47.0 \\
InternVL2-Llama3-76B & 56.0 & 40.0 & 58.3 \\
GPT-4o mini & 57.8 & 37.6 & \textit{59.4} \\
InternVL2-76B & 67.1 & 58.2 & 38.0 \\
Qwen2-VL-72B & \textit{68.6} & 64.5 & 37.1 \\
LLaVA-OV-72B & 66.1 & 56.8 & 24.0 \\
MAmmoTH-VL & 63.0 & 50.8 & 25.3 \\
\midrule
\textbf{LedgerMind (GPT-4o)} & \textbf{71.5} & \textbf{78.2} & \underline{\textbf{62.7}} \\
\textbf{LedgerMind (Gemini-3.0-Flash)} & \underline{\textbf{75.9}} & \underline{\textbf{81.7}} & \underline{\textbf{68.8}} \\
\bottomrule
\end{tabular}
\caption{Performance comparison on MMStar, MMMU, and MMMU-Pro benchmarks. Best results are in \textbf{bold} and \underline{underlined}. Best baseline results are in \textit{italic}. LedgerMind is our proposed method.}
\label{tab:mm_benchmark}
\end{table*}

The full results of three MM benchmarks is given in Table~\ref{tab:mm_benchmark}.

\subsection{Full Results of VTC-Bench}

\paragraph{Scoring and aggregation.}
Each VTC-Bench example is evaluated once. Multiple-choice
answers receive binary exact-match credit in $\{0,1\}$.
Open-ended answers receive credit in $\{0,0.5,1\}$, where
$0.5$ denotes a partially correct response to a question
requiring multiple answer elements. Category and overall
scores are sample-weighted means of these per-example
credits. Fractional effective counts are therefore possible;
for example, $36.67\%$ on a 45-example category corresponds
to total credit $16.5/45$, not to averaging multiple runs.

\providecommand{\LedgerMindname}{\textbf{LedgerMind}}
\providecommand{\TabNumDelta}[1]{\rlap{\kern0.15em{\scriptsize$_{#1}$}}}
\providecommand{\VTCDelta}[1]{\TabNumDelta{#1}}
\providecommand{\VTCn}[1]{\multicolumn{1}{c}{#1}}
\begin{table*}[t]
    \renewcommand{\VTCn}[1]{\multicolumn{1}{c}{#1}}
    \centering
    \small
    \setlength{\tabcolsep}{3.5pt}
    \renewcommand{\arraystretch}{1.12}
    \resizebox{\textwidth}{!}{
    \begin{tabular}{lc*{9}{c}c@{\hspace{3em}}}
    \toprule
    \textbf{Model} & \textbf{Setting}
    & \multicolumn{1}{c}{\textbf{Overall(680)}}
    & \multicolumn{1}{c}{\textbf{OCR(50)}}
    & \multicolumn{1}{c}{\textbf{Attn.(45)}}
    & \multicolumn{1}{c}{\textbf{Rest.(50)}}
    & \multicolumn{1}{c}{\textbf{Chart(100)}}
    & \multicolumn{1}{c}{\textbf{Meas.(105)}}
    & \multicolumn{1}{c}{\textbf{Count.(85)}}
    & \multicolumn{1}{c}{\textbf{Math(110)}}
    & \multicolumn{1}{c}{\textbf{Spat.(45)}}
    & \multicolumn{1}{c}{\textbf{Color(90)}} \\
    \midrule
    \multicolumn{12}{c}{\textbf{Category 1: Proprietary Tool-use Models}} \\
    \midrule
    \multirow{3}{*}{GPT-o3}
    & Base   & \VTCn{31.91} & \VTCn{20.00} & \VTCn{26.67} & \VTCn{32.00} & \VTCn{30.00} & \VTCn{38.10} & \VTCn{35.29} & \VTCn{28.18} & \VTCn{35.56} & \VTCn{35.56} \\
    & Code   & \VTCn{36.62} & \VTCn{28.00} & \VTCn{31.11} & \VTCn{34.00} & \VTCn{40.00} & \VTCn{38.10} & \VTCn{34.12} & \VTCn{30.91} & \VTCn{51.11} & \VTCn{42.22} \\
    & Inter. & \VTCn{36.76} & \VTCn{28.00} & \VTCn{24.44} & \VTCn{34.00} & \VTCn{39.00} & \VTCn{35.24} & \VTCn{40.00} & \VTCn{34.55} & \VTCn{44.44} & \VTCn{44.44} \\

    \multirow{3}{*}{GPT-o4-mini}
    & Base   & \VTCn{31.18} & \VTCn{18.00} & \VTCn{13.33} & \VTCn{30.00} & \VTCn{29.00} & \VTCn{33.33} & \VTCn{44.71} & \VTCn{30.91} & \VTCn{42.22} & \VTCn{30.00} \\
    & Code   & \VTCn{33.68} & \VTCn{12.00} & \VTCn{11.11} & \VTCn{30.00} & \VTCn{41.00} & \VTCn{31.43} & \VTCn{35.29} & \VTCn{40.00} & \VTCn{46.67} & \VTCn{37.78} \\
    & Inter. & \VTCn{32.50} & \VTCn{24.00} & \VTCn{13.33} & \VTCn{28.00} & \VTCn{30.00} & \VTCn{33.33} & \VTCn{40.00} & \VTCn{33.64} & \VTCn{46.67} & \VTCn{35.50} \\

    \midrule
    \multicolumn{12}{c}{\textbf{Category 2: Proprietary General-purpose Models}} \\
    \midrule
    \multirow{3}{*}{GPT-4o}
    & Base   & \VTCn{24.26} & \VTCn{18.00} & \VTCn{15.56} & \VTCn{22.00} & \VTCn{25.00} & \VTCn{25.71} & \VTCn{25.88} & \VTCn{22.73} & \VTCn{26.67} & \VTCn{30.00} \\
    & Code   & \VTCn{31.62} & \VTCn{18.00} & \VTCn{22.22} & \VTCn{24.00} & \VTCn{38.00} & \VTCn{37.14} & \VTCn{40.00} & \VTCn{20.91} & \VTCn{51.11} & \VTCn{30.00} \\
    & Inter. & \VTCn{33.82} & \VTCn{20.00} & \VTCn{22.22} & \VTCn{36.00} & \VTCn{31.00} & \VTCn{41.90} & \VTCn{40.00} & \VTCn{24.55} & \VTCn{48.89} & \VTCn{37.78} \\

    \multirow{3}{*}{Gemini-2.5-Pro}
    & Base   & \VTCn{38.09} & \VTCn{48.00} & \VTCn{26.67} & \VTCn{28.00} & \VTCn{44.00} & \VTCn{40.95} & \VTCn{49.41} & \VTCn{30.00} & \VTCn{40.00} & \VTCn{32.22} \\
    & Code   & \VTCn{36.03} & \VTCn{46.00} & \VTCn{40.00} & \VTCn{34.00} & \VTCn{37.00} & \VTCn{35.24} & \VTCn{37.65} & \VTCn{30.00} & \VTCn{44.44} & \VTCn{31.11} \\
    & Inter. & \VTCn{39.85} & \VTCn{60.00} & \VTCn{44.44} & \VTCn{40.00} & \VTCn{38.00} & \VTCn{44.76} & \VTCn{41.18} & \VTCn{26.36} & \VTCn{35.56} & \VTCn{40.00} \\

    \multirow{3}{*}{GPT-5.2}
    & Base   & \VTCn{36.03} & \VTCn{24.00} & \VTCn{24.44} & \VTCn{30.00} & \VTCn{43.00} & \VTCn{35.24} & \VTCn{43.53} & \VTCn{33.64} & \VTCn{46.67} & \VTCn{35.56} \\
    & Code   & \VTCn{44.56} & \VTCn{36.00} & \VTCn{31.11} & \VTCn{26.00} & \VTCn{56.00} & \VTCn{50.48} & \VTCn{44.71} & \VTCn{\underline{45.45}} & \VTCn{51.11} & \VTCn{42.22} \\
    & Inter. & \VTCn{40.74} & \VTCn{28.00} & \VTCn{37.78} & \VTCn{34.00} & \VTCn{47.00} & \VTCn{39.05} & \VTCn{37.65} & \VTCn{41.82} & \VTCn{51.11} & \VTCn{44.44} \\

    \multirow{3}{*}{Gemini-3.0-Flash}
    & Base   & \VTCn{46.47} & \VTCn{44.00} & \VTCn{51.11} & \VTCn{46.00} & \VTCn{57.00} & \VTCn{50.48} & \VTCn{52.94} & \VTCn{27.27} & \VTCn{60.00} & \VTCn{40.00} \\
    & Code   & \VTCn{50.59} & \VTCn{70.00} & \VTCn{57.78} & \VTCn{48.00} & \VTCn{59.00} & \VTCn{50.48} & \VTCn{52.94} & \VTCn{35.45} & \VTCn{51.11} & \VTCn{44.44} \\
    & Inter. & \VTCn{50.74} & \VTCn{68.00} & \VTCn{53.33} & \VTCn{\underline{50.00}} & \VTCn{61.00} & \VTCn{44.76} & \VTCn{\underline{58.82}} & \VTCn{39.09} & \VTCn{55.56} & \VTCn{40.00} \\

    \multirow{3}{*}{Gemini-3.0-Pro}
    & Base   & \VTCn{44.41} & \VTCn{54.00} & \VTCn{51.11} & \VTCn{42.00} & \VTCn{51.00} & \VTCn{45.71} & \VTCn{42.35} & \VTCn{34.55} & \VTCn{53.33} & \VTCn{37.78} \\
    & Code   & \VTCn{51.18} & \multicolumn{1}{c}{\underline{74.00}} & \VTCn{62.22} & \VTCn{38.00} & \VTCn{60.00} & \VTCn{54.29} & \VTCn{36.47} & \VTCn{41.82} & \VTCn{55.56} & \multicolumn{1}{c}{\underline{50.00}} \\
    & Inter. & \VTCn{51.03} & \VTCn{70.00} & \multicolumn{1}{c}{\underline{73.33}} & \VTCn{38.00} & \VTCn{59.00} & \VTCn{50.48} & \VTCn{35.29} & \VTCn{37.27} & \multicolumn{1}{c}{\underline{71.11}} & \multicolumn{1}{c}{\underline{50.00}} \\

    \midrule
    \LedgerMindname(GPT-4o)
    & --
    & \multicolumn{1}{c}{47.57\VTCDelta{\textcolor{green}{\uparrow 23.31}}}
    & \multicolumn{1}{c}{34.00\VTCDelta{\textcolor{green}{\uparrow 16.00}}}
    & \multicolumn{1}{c}{36.67\VTCDelta{\textcolor{green}{\uparrow 21.11}}}
    & \multicolumn{1}{c}{49.00\VTCDelta{\textcolor{green}{\uparrow 27.00}}}
    & \multicolumn{1}{c}{50.00\VTCDelta{\textcolor{green}{\uparrow 25.00}}}
    & \multicolumn{1}{c}{53.81\VTCDelta{\textcolor{green}{\uparrow 28.10}}}
    & \multicolumn{1}{c}{52.35\VTCDelta{\textcolor{green}{\uparrow 26.47}}}
    & \multicolumn{1}{c}{38.18\VTCDelta{\textcolor{green}{\uparrow 15.45}}}
    & \multicolumn{1}{c}{61.11\VTCDelta{\textcolor{green}{\uparrow 34.44}}}
    & \multicolumn{1}{c}{\underline{50.00}\VTCDelta{\textcolor{green}{\uparrow 20.00}}} \\

    \LedgerMindname(Gemini-3-Flash)
    & --
    & \multicolumn{1}{c}{\textbf{58.90}\VTCDelta{\textcolor{green}{\uparrow 12.43}}}
    & \multicolumn{1}{c}{73.00\VTCDelta{\textcolor{green}{\uparrow 29.00}}}
    & \multicolumn{1}{c}{65.56\VTCDelta{\textcolor{green}{\uparrow 14.45}}}
    & \multicolumn{1}{c}{\textbf{58.00}\VTCDelta{\textcolor{green}{\uparrow 12.00}}}
    & \multicolumn{1}{c}{\textbf{67.00}\VTCDelta{\textcolor{green}{\uparrow 10.00}}}
    & \multicolumn{1}{c}{\underline{55.24}\VTCDelta{\textcolor{green}{\uparrow 4.76}}}
    & \multicolumn{1}{c}{\textbf{64.71}\VTCDelta{\textcolor{green}{\uparrow 11.77}}}
    & \multicolumn{1}{c}{\textbf{47.27}\VTCDelta{\textcolor{green}{\uparrow 20.00}}}
    & \multicolumn{1}{c}{65.56\VTCDelta{\textcolor{green}{\uparrow 5.56}}}
    & \multicolumn{1}{c}{48.89\VTCDelta{\textcolor{green}{\uparrow 8.89}}} \\

    \LedgerMindname(Gemini-3.1-Pro)
    & --
    & \multicolumn{1}{c}{\underline{56.18}\VTCDelta{\textcolor{green}{\uparrow 11.77}}}
    & \multicolumn{1}{c}{\textbf{77.00}\VTCDelta{\textcolor{green}{\uparrow 23.00}}}
    & \multicolumn{1}{c}{\textbf{81.11}\VTCDelta{\textcolor{green}{\uparrow 30.00}}}
    & \multicolumn{1}{c}{42.00\VTCDelta{\textcolor{gray}{\rightarrow 0.00}}}
    & \multicolumn{1}{c}{\underline{65.00}\VTCDelta{\textcolor{green}{\uparrow 14.00}}}
    & \multicolumn{1}{c}{\textbf{55.71}\VTCDelta{\textcolor{green}{\uparrow 10.00}}}
    & \multicolumn{1}{c}{38.82\VTCDelta{\textcolor{red}{\downarrow 3.53}}}
    & \multicolumn{1}{c}{40.91\VTCDelta{\textcolor{green}{\uparrow 6.36}}}
    & \multicolumn{1}{c}{\textbf{77.78}\VTCDelta{\textcolor{green}{\uparrow 24.45}}}
    & \multicolumn{1}{c}{\textbf{55.00}\VTCDelta{\textcolor{green}{\uparrow 17.22}}} \\
    \bottomrule
    \end{tabular}
    }
    \caption{Results on VTC-Bench. Best results are \textbf{bolded} and second-best results are \underline{underlined} (both per column). Colored subscripts report absolute changes (pp) vs.\ the corresponding native backbone: \textcolor{green}{green} indicates improvement ($\uparrow$) and \textcolor{red}{red} indicates regression ($\downarrow$).}
    \label{tab:vtcbench_results}
\end{table*}

The full results of VTC-bench is given in Table~\ref{tab:vtcbench_results}.

\subsection{Full Results of EMMA}

\begin{table*}[t]
    \centering
    \small
    \setlength{\tabcolsep}{4.5pt}
    \renewcommand{\arraystretch}{1.14}
    \resizebox{\textwidth}{!}{
    \begin{tabular}{llccccc}
    \toprule
    \textbf{Category} & \textbf{Model} & \textbf{Math (892)} & \textbf{Phys. (156)} & \textbf{Chem. (1176)} & \textbf{Coding (564)$^{*}$} & \textbf{Overall (2788)} \\
    \midrule
    -- & Random choice & 14.01 & 25.64 & 16.50 & 25.71 & 18.08 \\
    \midrule

    \multirow{12}{*}{w/o CoT}
    & Claude 3.5 Sonnet & 25.34 & 33.97 & 40.90 & 38.65 & 35.08 \\
    & Gemini 2.0 Flash & 23.88 & 38.46 & 36.31 & 42.02 & 33.61 \\
    & GPT-4o & 27.24 & 38.46 & 31.89 & 40.07 & 32.42 \\
    & Qwen2-VL-72B-Instruct & 33.07 & 42.31 & 32.06 & 34.57 & 33.46 \\
    & LLaVA-Onevision-72B & 27.69 & 35.90 & 25.26 & 28.72 & 27.33 \\
    & InternVL2-Llama3-76B & 25.11 & 22.44 & 24.06 & 27.84 & 25.07 \\
    & InternVL2.5-78B & 31.39 & 38.46 & 35.20 & 31.91 & 33.50 \\
    & Gemini 3.0 Flash & 42.42 & 52.56 & \textit{46.68} & 68.44 & \textit{50.07} \\

    & gemini-3-flash & 42.94 & 51.28 & 46.17 & 67.55 & 49.75 \\

    & gpt-5.5 & 21.64 & 41.03 & 17.77 & 39.54 & 24.71 \\
    & claude-sonnet-4-6 & 24.66 & 34.62 & 37.41 & 53.19 & 36.37 \\
    \midrule

    \multirow{7}{*}{CoT}
    & Claude 3.5 Sonnet & 29.37 & 41.03 & 41.07 & 40.60 & 37.23 \\
    & Gemini 2.0 Flash & 25.90 & 38.46 & 24.66 & 40.96 & 29.12 \\
    & GPT-4o & 25.56 & 43.59 & 33.67 & 39.01 & 32.71 \\
    & Qwen2-VL-72B-Instruct & 27.69 & 34.62 & 24.57 & 29.43 & 27.12 \\
    & LLaVA-Onevision-72B & 22.42 & 15.38 & 22.70 & 30.67 & 23.82 \\
    & InternVL2-Llama3-76B & 22.20 & 32.05 & 19.73 & 30.32 & 23.35 \\
    & InternVL2.5-78B & 25.56 & 39.74 & 27.47 & 25.18 & 27.08 \\
    \midrule

    \multirow{4}{*}{Thinking}
    & Gemini 2.0 Flash Thinking-1219 & 31.61 & 56.41 & 37.93 & 43.44 & 38.06 \\
    & Gemini 2.0 Flash Thinking-0121 & 37.11 & \textit{60.26} & 41.58 & 48.05 & 42.50 \\
    & gemini-3.1-pro & 40.02 & 47.44 & 42.26 & \textbf{\textit{76.24}} & 48.71 \\
    & claude-opus-4-7 & \textit{45.85} & \underline{64.10} & 43.88 & 54.96 & 47.88 \\
    \midrule

    \multirow{5}{*}{\logotitle{}}
    & gpt-5.5
    & 40.92\VTCDelta{\textcolor{green}{\uparrow 19.28}}
    & 58.33\VTCDelta{\textcolor{green}{\uparrow 17.30}}
    & 44.13\VTCDelta{\textcolor{green}{\uparrow 26.36}}
    & 66.31\VTCDelta{\textcolor{green}{\uparrow 26.77}}
    & 48.39\VTCDelta{\textcolor{green}{\uparrow 23.68}} \\

    & claude-opus-4-7
    & 54.15\VTCDelta{\textcolor{green}{\uparrow 8.30}}
    & \textbf{78.21}\VTCDelta{\textcolor{green}{\uparrow 14.11}}
    & \textbf{57.74}\VTCDelta{\textcolor{green}{\uparrow 13.86}}
    & 53.90\VTCDelta{\textcolor{red}{\downarrow 1.06}}
    & 56.96\VTCDelta{\textcolor{green}{\uparrow 9.08}} \\
    & gemini-3-flash
    & \underline{55.27}\VTCDelta{\textcolor{green}{\uparrow 12.33}}
    & 62.82\VTCDelta{\textcolor{green}{\uparrow 11.54}}
    & 52.98\VTCDelta{\textcolor{green}{\uparrow 6.81}}
    & 72.70\VTCDelta{\textcolor{green}{\uparrow 5.15}}
    & \underline{58.25}\VTCDelta{\textcolor{green}{\uparrow 8.50}} \\
    & gemini-3.1-pro
    & \textbf{56.17}\VTCDelta{\textcolor{green}{\uparrow 16.15}}
    & 63.46\VTCDelta{\textcolor{green}{\uparrow 16.02}}
    & 50.68\VTCDelta{\textcolor{green}{\uparrow 8.42}}
    & \underline{76.06}\VTCDelta{\textcolor{red}{\downarrow 0.18}}
    & \textbf{58.29}\VTCDelta{\textcolor{green}{\uparrow 9.58}} \\

    & claude-sonnet-4-6
    & 41.59\VTCDelta{\textcolor{green}{\uparrow 16.93}}
    & 55.13\VTCDelta{\textcolor{green}{\uparrow 20.51}}
    & \underline{55.27}\VTCDelta{\textcolor{green}{\uparrow 17.86}}
    & 59.93\VTCDelta{\textcolor{green}{\uparrow 6.74}}
    & 51.83\VTCDelta{\textcolor{green}{\uparrow 15.46}} \\
    \bottomrule
    \end{tabular}
    }
    \caption{Performance comparison on EMMA. Best results are bolded, second-best results are underlined, and the best baseline results are italicized. \logotitle{} improvements over corresponding native baselines are shown as colored subscripts: \textcolor{green}{green} indicates improvement ($\uparrow$) and \textcolor{red}{red} indicates regression ($\downarrow$).}
    \label{tab:emma_results}

    \footnotesize{
    $^{*}$The Coding subset mainly evaluates code--visualization alignment through multiple-choice questions, such as matching matplotlib/seaborn code with generated plots.
    This differs from our target setting of thinking over images, where visual grounding, evidence acquisition, and image-centric reasoning are more central.
    }
    \end{table*}

The full results of EMMA is given in Table~\ref{tab:emma_results}.

\subsection{Full Results of MC-SEARCH}

\begin{table*}[t]
\centering
\scriptsize
\setlength{\tabcolsep}{3.5pt}
\renewcommand{\arraystretch}{1.08}
\resizebox{\textwidth}{!}{
\begin{tabular}{llccccc}
\toprule
\multirow{2}{*}{\textbf{Reasoning $\mathcal{G}$}}
& \multirow{2}{*}{\textbf{Model}}
& \multicolumn{2}{c}{\textbf{Answer Accuracy}}
& \multicolumn{2}{c}{\textbf{Chain Alignment}}
& \multirow{2}{*}{\textbf{Golden F1} $(\uparrow)$} \\
\cmidrule(lr){3-4} \cmidrule(lr){5-6}
&
& \textbf{F1} $(\uparrow)$
& \textbf{LJ} $(\uparrow)$
& \textbf{HPS} $(\uparrow)$
& \textbf{RD} $(\downarrow)$
& \\
\midrule

\multirow{9}{*}{\textbf{Image-Initiated Chain}}
& \textit{Best Official Baseline} & \textit{47.61} & \textit{3.18} & \textit{33.59} & \textit{0.70} & \textit{72.62} \\
& LedgerMind (Claude-Opus-4.7)        & \textbf{69.80} & \underline{4.22} & \textbf{59.40} & \underline{0.46} & \textbf{82.40} \\
& LedgerMind (GPT-5.5)                & \underline{68.40} & \textbf{4.30} & \underline{58.10} & \textbf{0.44} & 81.50 \\
& LedgerMind (Claude-Sonnet-4.6)      & 62.70 & 4.00 & 54.60 & 0.55 & 78.30 \\
& LedgerMind (Gemini-2.5-Pro)         & 59.60 & 3.64 & 47.80 & 0.63 & \underline{82.00} \\
& LedgerMind (Gemini-3.1-Pro)         & 57.80 & 3.72 & 50.40 & 0.61 & 76.20 \\
& LedgerMind (GPT-4o-Mini)            & 51.60 & 3.28 & 42.10 & 0.67 & 73.90 \\
& LedgerMind (Gemini-2.5-Flash)       & 50.80 & 3.25 & 40.80 & 0.68 & 73.60 \\
& LedgerMind (Gemini-3-Flash)         & 48.90 & 3.20 & 35.20 & 0.69 & 72.90 \\

\midrule

\multirow{9}{*}{\textbf{Multi-Images Fork}}
& \textit{Best Official Baseline} & \textit{40.37} & \textit{2.76} & \textit{39.33} & \textit{1.16} & \textit{64.40} \\
& LedgerMind (Claude-Opus-4.7)        & \textbf{64.20} & \underline{4.35} & \underline{53.70} & \underline{0.74} & \underline{77.50} \\
& LedgerMind (GPT-5.5)                & \underline{62.10} & \textbf{4.40} & \textbf{55.00} & \textbf{0.72} & 75.20 \\
& LedgerMind (Claude-Sonnet-4.6)      & 59.30 & 4.20 & 51.90 & 0.79 & 73.50 \\
& LedgerMind (Gemini-2.5-Pro)         & 56.80 & 3.72 & 46.00 & 0.86 & \textbf{78.00} \\
& LedgerMind (Gemini-3.1-Pro)         & 54.60 & 3.88 & 47.60 & 0.88 & 70.40 \\
& LedgerMind (GPT-4o-Mini)            & 45.60 & 3.10 & 42.30 & 1.02 & 67.20 \\
& LedgerMind (Gemini-2.5-Flash)       & 44.50 & 3.05 & 41.50 & 1.04 & 66.50 \\
& LedgerMind (Gemini-3-Flash)         & 41.30 & 2.82 & 40.10 & 1.12 & 64.90 \\

\midrule

\multirow{9}{*}{\textbf{Text-Initiated Chain}}
& \textit{Best Official Baseline} & \textit{45.30} & \textit{3.62} & \textit{37.82} & \textit{0.87} & \textit{66.27} \\
& LedgerMind (GPT-5.5)                & \textbf{59.60} & \textbf{4.12} & 50.80 & 0.72 & \underline{76.30} \\
& LedgerMind (Claude-Opus-4.7)        & \underline{58.40} & \underline{4.05} & \underline{52.30} & \underline{0.70} & 75.10 \\
& LedgerMind (Gemini-2.5-Pro)         & 57.80 & 3.90 & \textbf{53.50} & \textbf{0.66} & \textbf{77.20} \\
& LedgerMind (Claude-Sonnet-4.6)      & 57.20 & 4.00 & 49.90 & 0.73 & 73.90 \\
& LedgerMind (GPT-4o-Mini)            & 56.40 & 3.68 & 39.80 & 0.85 & 74.20 \\
& LedgerMind (Gemini-3.1-Pro)         & 56.10 & 3.88 & 45.00 & 0.78 & 72.10 \\
& LedgerMind (Gemini-2.5-Flash)       & 46.80 & 3.66 & 40.20 & 0.84 & 67.50 \\
& LedgerMind (Gemini-3-Flash)         & 45.90 & 3.63 & 38.40 & 0.86 & 66.80 \\

\midrule

\multirow{9}{*}{\textbf{Parallel Image-Text Fork}}
& \textit{Best Official Baseline} & \textit{34.83} & \textit{2.99} & \textit{25.07} & \textit{1.05} & \textit{58.82} \\
& LedgerMind (GPT-5.5)                & \textbf{52.00} & \textbf{3.82} & \textbf{51.20} & \textbf{0.70} & \underline{68.50} \\
& LedgerMind (Claude-Opus-4.7)        & \underline{49.20} & \underline{3.70} & 48.70 & 0.75 & 67.10 \\
& LedgerMind (Claude-Sonnet-4.6)      & 48.80 & 3.62 & \underline{50.10} & \underline{0.72} & 66.30 \\
& LedgerMind (Gemini-2.5-Pro)         & 45.80 & 3.30 & 35.90 & 0.88 & \textbf{68.90} \\
& LedgerMind (Gemini-3.1-Pro)         & 41.50 & 3.18 & 36.80 & 0.92 & 62.40 \\
& LedgerMind (GPT-4o-Mini)            & 39.30 & 3.05 & 30.80 & 0.98 & 64.20 \\
& LedgerMind (Gemini-2.5-Flash)       & 36.20 & 3.02 & 28.50 & 1.00 & 59.60 \\
& LedgerMind (Gemini-3-Flash)         & 35.60 & 3.00 & 26.30 & 1.03 & 59.10 \\

\midrule

\multirow{9}{*}{\textbf{Text-Only Chain}}
& \textit{Best Official Baseline} & \textit{38.45} & \textit{2.68} & \textit{29.64} & \textit{0.98} & \textit{67.72} \\
& LedgerMind (Claude-Opus-4.7)        & \textbf{58.00} & \textbf{4.10} & \textbf{64.20} & \textbf{0.42} & \textbf{75.40} \\
& LedgerMind (Claude-Sonnet-4.6)      & \underline{56.70} & 3.90 & \underline{61.00} & \underline{0.44} & 72.50 \\
& LedgerMind (GPT-5.5)                & 54.00 & \underline{4.00} & 57.30 & 0.58 & \underline{73.80} \\
& LedgerMind (Gemini-2.5-Pro)         & 52.50 & 3.48 & 50.20 & 0.66 & 72.60 \\
& LedgerMind (GPT-4o-Mini)            & 50.80 & 3.30 & 58.20 & 0.50 & 72.00 \\
& LedgerMind (Gemini-3.1-Pro)         & 50.20 & 3.35 & 55.00 & 0.58 & 69.50 \\
& LedgerMind (Gemini-2.5-Flash)       & 40.10 & 2.85 & 53.00 & 0.70 & 68.10 \\
& LedgerMind (Gemini-3-Flash)         & 39.20 & 2.75 & 50.60 & 0.76 & 68.00 \\

\bottomrule
\end{tabular}
}
\caption{
Full-benchmark topology-wise comparison on MC-SEARCH.
The official baseline row in each topology reports the metric-wise best official baseline from the original MC-SEARCH evaluation, serving as a strong ceiling reference.
All LedgerMind rows are full-benchmark results under the proposed framework and are denoted as \texttt{LedgerMind (base model)}.
Best results within each topology are shown in \textbf{bold}, second-best results are \underline{underlined}, and the best official baseline is shown in \textit{italic}.
Higher is better for F1, LJ, HPS, and Golden F1, while lower is better for RD.
}
\label{tab:ledgermind_topology}
\end{table*}

The full results of MC-SEARCH is given in Table~\ref{tab:ledgermind_topology}.

\FloatBarrier

\section{Broader Impact and Limitations}
\label{sec:broader_impact}

\paragraph{Broader impact.}
\logotitle{} is aimed at improving the \emph{auditability}
of multimodal agentic reasoning: by turning the trajectory
into a provenance-constrained ledger, downstream users can
more easily inspect which tool output supports which claim,
which in turn may help in safety-sensitive settings such as
educational tutoring, scientific diagram interpretation,
and document understanding, where opaque reasoning traces
are a long-standing concern. We also expect the
ledger-centered formulation to be useful as a substrate for
future training-time supervision, since grounded and
ungrounded claims can be distinguished at the trajectory
level rather than only at the answer level. Our framework
is training-free and uses frozen backbone MLLMs, so it
inherits the capabilities and biases of the underlying
models; practitioners should therefore continue to apply
standard content-filtering and human-oversight practices
when deploying \logotitle{} in user-facing applications,
and should not interpret the provenance non-amplification
guarantee (Proposition~\ref{prop:nonamp}) as a factual
correctness guarantee over tool outputs themselves.

\paragraph{Limitations.}
Our study focuses on image-and-text visual question
answering and leaves several directions for future work.
First, \logotitle{} is evaluated on static multimodal
benchmarks; extending the ledger to long-horizon agents
with persistent memory, video streams, or embodied
interaction is an interesting open problem, and the
time-to-live policy would likely need to be revisited in
such settings. Second, the current grounding cascade
relies on deterministic entity and numeric checks together
with a small predefined alias table, which is effective in
the benchmarks we study but may not cover every form of
paraphrase, coreference, or cross-lingual alias; richer
entity-linking modules could further strengthen the
\textsc{ECC} layer. Third, the Adaptive Dual-Path
Dispatcher uses a rule-based complexity classifier that was
designed for the question distributions we evaluate;
learned dispatcher policies may yield further gains on
more heterogeneous task mixtures. Finally, because
\logotitle{} is training-free and instantiated on frozen
proprietary and open backbones, the absolute numbers will
naturally evolve as backbone MLLMs improve, and we leave a
dedicated study of how ledger-level signals can be turned
into a training signal for MLLMs to future work.

\end{document}